\providecommand{\AnonymousSubmission}{0}
\documentclass{article}

\newif\ifanonymous
\if\AnonymousSubmission1 \anonymoustrue\fi
\ifanonymous
  \usepackage{tmlr}            
\else
  \usepackage[preprint]{tmlr}  
\fi

\usepackage[utf8]{inputenc}
\usepackage[T1]{fontenc}
\usepackage{hyperref}
\usepackage{url}
\usepackage{booktabs}
\usepackage{amsfonts,amsmath,amssymb,amsthm}
\usepackage{nicefrac}
\usepackage{microtype}
\usepackage{xcolor}
\usepackage{graphicx}
\usepackage{multirow}
\usepackage{tabularx}
\usepackage{enumitem}
\usepackage{placeins}
\usepackage{needspace}
\usepackage[font=small,labelfont=bf]{caption}
\usepackage[capitalise,noabbrev]{cleveref}

\hypersetup{pdftitle={Input-Blind Controls Produce Substantial Oracle Headroom for Layer Programs in Multiple-Choice Evaluation},pdfsubject={Placebo comparisons of layer-program oracle headroom with three independent random-direction draws per model},colorlinks=true,linkcolor=[rgb]{0.1,0.25,0.55},citecolor=[rgb]{0.1,0.25,0.55},urlcolor=[rgb]{0.1,0.25,0.55}}
\setlist{nosep,leftmargin=*}

\input{generated/numbers.sty}
\input{generated/redraw.sty}
\RequirePackage{fix-cm}
\RequirePackage{array}
\RequirePackage{textcomp}
\definecolor{muted}{gray}{0.22}
\newcommand{\ci}[3]{\ensuremath{#1\,\textcolor{muted}{[#2,#3]}}}
\newcommand{\cib}[3]{\ensuremath{\mathbf{#1}\,\textcolor{muted}{[#2,#3]}}}
\newcommand{\num}[1]{\ensuremath{#1}}
\newcommand{\numb}[1]{\ensuremath{\mathbf{#1}}}
\newcommand{\inlineci}[3]{\ci{#1}{#2}{#3}}
\newcommand{\inlinecib}[3]{\cib{#1}{#2}{#3}}

\DeclareMathVersion{approvedtable}
\SetSymbolFont{operators}{approvedtable}{OT1}{ztmcm}{m}{n}
\SetSymbolFont{letters}{approvedtable}{OML}{ztmcm}{m}{it}
\SetSymbolFont{symbols}{approvedtable}{OMS}{ztmcm}{m}{n}
\SetSymbolFont{largesymbols}{approvedtable}{OMX}{ztmcm}{m}{n}
\SetMathAlphabet{\mathbf}{approvedtable}{OT1}{ptm}{bx}{n}
\SetMathAlphabet{\mathit}{approvedtable}{OT1}{ptm}{m}{it}
\newcommand{\approvedtablefont}{\fontfamily{ptm}\selectfont\mathversion{approvedtable}%
\thinmuskip=2mu\medmuskip=2.5mu plus 1mu minus 1mu\thickmuskip=4mu plus 1.5mu minus 1mu}
\newcommand{\val}[1]{\ifcsname val@#1\endcsname\csname val@#1\endcsname\else\errmessage{Missing generated value #1}\fi}
\newcommand{\storedci}[1]{\val{#1}\,\textcolor{muted}{\val{#1.ci}}}
\newcommand{\storedcib}[1]{\textbf{\val{#1}}\,\textcolor{muted}{\val{#1.ci}}}

\newcommand{\Grepro}{G_{\mathrm{repro}}}
\newcommand{\Graw}{G_{\mathrm{raw}}}
\newcommand{\RD}{\textsf{RD}}
\newcommand{\RDf}{\textsf{RD-fixed}}
\newcommand{\RDi}{\textsf{RD-item}}
\newcommand{\SH}{\textsf{SH}}
\newcommand{\Rot}{\textsf{Rot}}
\newcommand{\RDm}{\text{\textsf{RD-fixed}}}
\newcommand{\Real}{\mathrm{Real}}
\newcommand{\Acc}{\mathrm{Acc}}

\newcommand{\Tres}{T_{\mathrm{rescue}}}

\newtheorem{proposition}{Proposition}
\newtheorem{observation}{Observation}
\crefname{observation}{Observation}{Observations}
\Crefname{observation}{Observation}{Observations}
\theoremstyle{definition}

\title{Input-Blind Controls Produce Substantial Oracle Headroom\\ for Layer Programs in Multiple-Choice Evaluation}

\ifanonymous
  \author{}
\else
  \author{\name Yibei Guo \email guo.yibeii@gmail.com\\
    \addr College of Aeronautics and Engineering, Kent State University
    \AND
    \name Rui Liu {\normalfont\small(corresponding author)} \email ruiliu.robotics@gmail.com\\
    \addr College of Aeronautics and Engineering, Kent State University}
  \hypersetup{pdfauthor={Yibei Guo, Rui Liu}}
\fi


\begin{document}
\raggedbottom

\maketitle

\begin{abstract}
Adaptive computation aims to improve language-model inference by tailoring execution to each input.
For layer programs, oracle evaluations use known answers to estimate the potential gain from this flexibility, before a practical selector is available.
However, a gain from selection does not by itself explain why the chosen programs help.
This study investigates this distinction using 32 layer-skipping and repetition programs on two models and \val{L.items} multiple-choice items.
The analysis compares their gains over a fixed action selected without the evaluation prompt with those of input-blind perturbations at the same sites, re-evaluating selections on another prompt.
With shared option order, the controls give \val{redraw@qwen.G_rd.range} and \val{redraw@llama.G_rd.range} percentage points of headroom on Qwen3-4B-Base and Llama-3.1-8B, exceeding the real programs' \val{redraw@qwen.d1.G_real} and \val{redraw@llama.d1.G_real} in all three random-direction draws per model.
They match answer-change rate only, and the ordering depends on the menu: in post hoc comparisons, real programs lead on Llama's repeat-only menu in every draw.
A smaller KL-calibrated comparison, including an input-dependent control, favours real programs in point estimate, with inconclusive corrected tests.
Fixed letter offsets produce headroom of similar scale.
Rotating options sharply reduces both families' headroom, while leaving positive real-minus-control differences of \val{redraw@qwen.D_perm.range} and \val{redraw@llama.D_perm.range} points; their magnitudes and statistical support depend on further adjustments and the reference.
A supplementary generated-answer test finds that search-selected programs keep a \val{T.T}-point advantage over programs selected for other problems after rewording, without a placebo comparison.
These results show that substantial headroom can persist across prompts with shared option order without establishing a benefit specific to the selected layer computation; neither ordering against these controls identifies that benefit.
\end{abstract}

\section{Introduction}
\label{sec:intro}

Different inputs may benefit from different ways of executing the same language model.
A growing line of work explores this possibility by skipping, repeating or looping a frozen model's layers for each input \citep{li2025cola,li2026polar,heakl2026drllm}.
Oracle evaluations help assess the opportunity for adaptation before a practical selector is available: they use known answers to choose a compute action for each input.
Gains over the unmodified forward pass show that the candidate actions offer useful choices.
Some of these works also interpret the gains: CoLa sees in them ``the shortcomings of using a fixed architecture of pre-trained LLMs for inference on different samples'' \citep{li2025cola}, and PoLar writes that ``higher latent execution complexity is not merely helpful, but often necessary for solving harder inputs'' \citep{li2026polar}.

Such interpretations require distinguishing the value of selection from the contribution of the selected computation.
An oracle uses the answer to choose among interventions that can improve some predictions and worsen others.
A menu can therefore offer useful choices even when its interventions omit the corresponding layer edits.
Re-measuring the chosen actions on a second prompt---the same question after different few-shot examples---tests whether their advantages persist.
An item's stable sensitivity to perturbation and structure shared by the two prompts can still support a persistent gain.
This motivates the central question: \emph{what do oracle gains establish about the contribution of the intended computation?}

The experiments use controlled menus of single-segment skipping and repetition programs in multiple-choice evaluation.
To distinguish the value of choosing different actions across inputs from the benefit of one generally useful action, this study measures the oracle's gain over the best fixed action selected without the evaluation prompt (on separate development items in the main study).
This gain is the oracle’s \emph{headroom}.
Layer programs are then compared with placebo actions, in the spirit of randomization tests for saliency maps \citep{adebayo2018sanity}.
These controls perturb the model at the same sites, are calibrated on separate items using a named effect statistic and tolerance, and offer the oracle the same number of candidates.
To measure headroom obtainable without the corresponding input-dependent layer edit, the study uses an input-blind injected increment (\cref{fig:overview}).
Finally, rotating the options of the second prompt moves the correct answer to another letter, testing how much the gain depends on the option order shared by the prompts.

\begin{figure}[t]
\centering
\includegraphics[width=\textwidth]{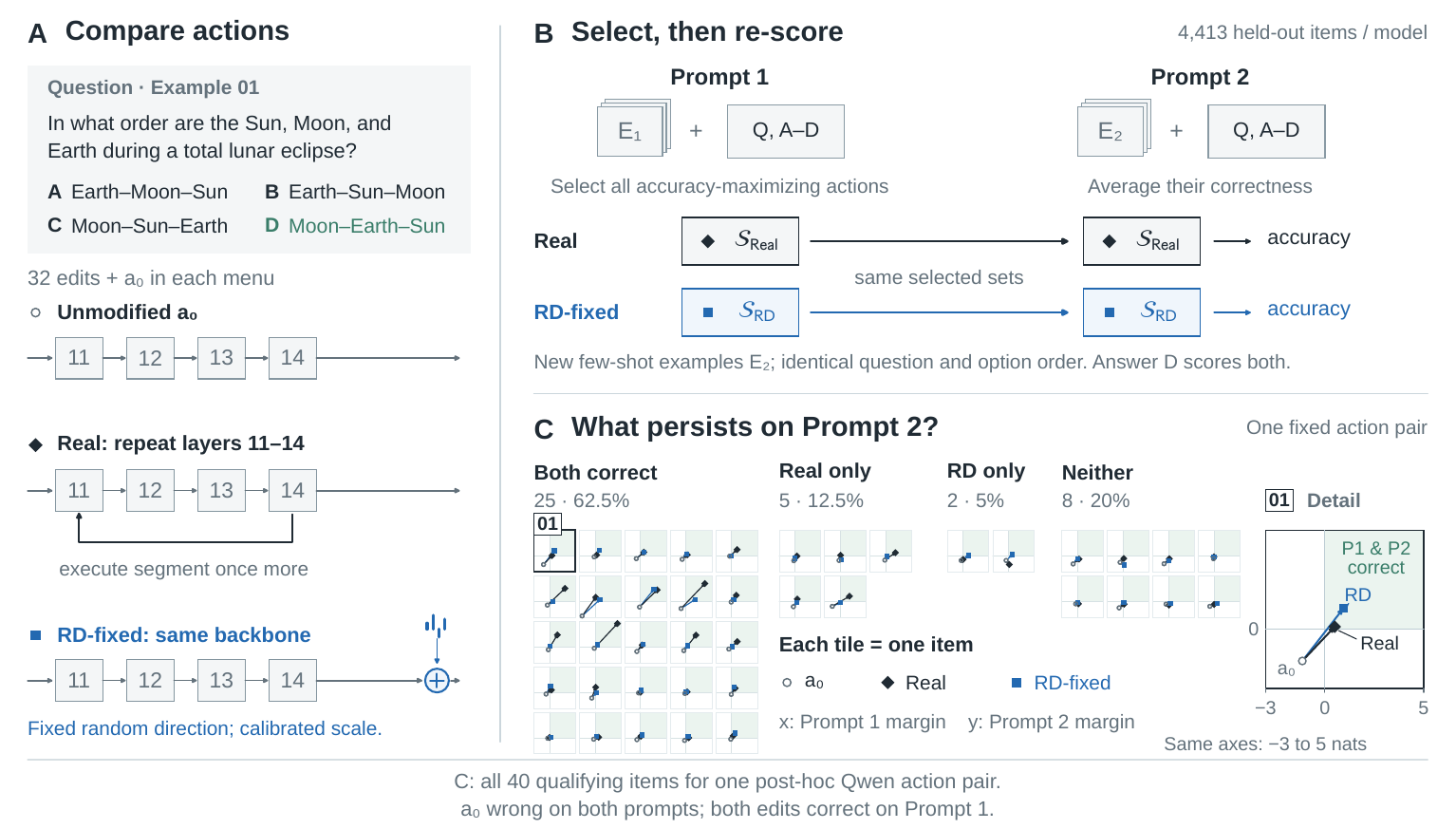}
\caption{\textbf{Cross-prompt oracle comparison} (Qwen3-4B-Base; original RD draw).
(A)~Each family offers 32 edits and the unmodified pass $a_0$.
The illustrated real action repeats layers 11--14; RD-fixed preserves the unmodified execution and adds an input-independent update at the segment exit.
Its direction, norm profile and scale are shared across items and prompts; calibration matches answer-change rate only, on separate data.
(B)~For each item, all accuracy-maximizing actions on Prompt 1 are retained and their correctness averaged on Prompt 2.
Only the few-shot examples change; the question and ordered options stay the same.
Target answer labels are used for scoring and are withheld from the model.
The full study uses 4,413 items per model and averages both selection directions.
(C)~All 40 qualifying items for one post-hoc selected action pair: $a_0$ is wrong on both prompts and both edits are correct on Prompt 1.
The counts describe this conditional subset, not the full oracle menu.
Coordinates are correct-option log-probability margins against the strongest incorrect option, on common linear $-3$ to $5$ axes; positive values indicate correctness.
Arrows connect unmodified and edited outputs, not hidden states. Detail 01 enlarges the printed question's tile without adding an observation (\cref{app:figure-notes}).}
\label{fig:overview}
\end{figure}

\Needspace{8\baselineskip}
The experiments yield three main findings:
\begin{enumerate}[itemsep=3pt]
\item \textbf{Input-blind controls produce substantial cross-prompt headroom.}
With shared option order, \RDf{} exceeds real-program headroom on the full menus in all three random-direction draws for both models.
These controls omit the corresponding layer edits and are calibrated to answer-change rate only.
Their gains demonstrate that substantial headroom can persist across prompts without establishing a benefit specific to the selected layer computation (\cref{sec:large,sec:redraw}).

\item \textbf{The ordering depends on the menu and control construction.}
On Llama's repeat-only menu, real programs lead in every draw in post hoc comparisons.
In a smaller KL-calibrated study using additional control constructions, controls reach \val{S.rcross.min}--\val{S.rcross.max}\% of real headroom in point estimate, with inconclusive corrected comparisons.
Thus, the full-menu ordering does not extend uniformly across the tested settings (\cref{sec:large,sec:redraw}).

\item \textbf{Shared option order strongly affects the gains.}
Fixed letter offsets produce headroom of similar scale in the original-draw diagnostics.
Rotating the options sharply reduces both families' headroom, while leaving a positive real-minus-control difference in every full-menu draw.
These residual differences depend on the comparison design and do not identify a computation-specific benefit (\cref{sec:mechanism}).
\end{enumerate}

A supplementary generated-answer experiment measures the advantage retained by search-selected programs after rewording, without a placebo comparison (\cref{sec:transfer}).
The placebo findings concern the controlled menus studied here; they do not constitute tests of CoLa or PoLar's published search systems.

\Needspace{7\baselineskip}
\section{What oracle headroom can identify}
\label{sec:theory}

\paragraph{Setup.}
Items are multiple-choice questions scored by the next-token log-probabilities of the option letters after ``Answer:''.
For item $i$, prompt $v\in\{1,2\}$ and action $a$, the margin $U_i^v(a)$ is the log-probability of the correct letter minus the largest log-probability of an incorrect letter, and $\Acc_i^v(a)=\mathbf{1}[U_i^v(a)>0]$.
An action family is $F=\{a_0,a_1,\dots,a_K\}$, where $a_0$ is the unmodified forward pass.
In the fixed-order design, each item is asked with two prompts that differ only in their few-shot examples; the question, the options and their order are identical, so the correct letter is the same in both (\cref{fig:overview}).

\paragraph{Oracle quantities.}
With 0/1 accuracy, many items have several best actions. Averaging over them avoids choosing an arbitrary winner or using extra margin information.
Preferring the unmodified action on ties answers a different selection question; results under this rule are reported as a sensitivity check. The primary averaging rule was inherited unchanged by the later rotated design.
Let $A_i^{s}=\arg\max_a \Acc_i^{s}(a)$ be the set of actions tied for best on the selection prompt $s$, and let $a_S$ be a static action, the single action applied to every item, chosen without the evaluation prompt.
The cross-prompt and same-prompt headroom of $F$ are
\begin{align}
\Grepro(F)&=\tfrac12\sum_{(s,s')}\operatorname{mean}_i\Bigl[\operatorname{mean}_{a\in A_i^{s}}\Acc_i^{s'}(a)-\Acc_i^{s'}(a_S)\Bigr], \label{eq:headroom}\\
\Graw(F)&=\tfrac12\sum_{v}\operatorname{mean}_i\Bigl[\max_a\Acc_i^{v}(a)-\Acc_i^{v}(a_S)\Bigr], \nonumber
\end{align}
where $(s,s')$ runs over both orderings of the two prompts: the cross-prompt oracle chooses actions on one prompt and scores the same actions on the other, in both directions, whereas same-prompt headroom chooses and scores on the same prompt.
A placebo family $N$ is compared with the real family through $\rho(N)=\Grepro(N)/\Grepro(\Real)$ and $\Delta(N)=\Grepro(\Real)-\Grepro(N)$.
The analysis never takes a maximum over placebo families, which would itself be a hindsight selection.

\paragraph{Identifying computation-specific effects.}
To distinguish possible sources of an observed effect, write the effect of action $k$ on item $i$ in prompt $v$ as
\begin{equation}
x_{ik}^{v}=U_i^v(a_k)-U_i^v(a_0)=\mu_k+g_{ik}+h_{ik}+e_{ik}^{v},
\label{eq:model}
\end{equation}
with mean effect $\mu_k$, $g_{ik}$ a stable item-by-action effect that a \emph{non-specific} perturbation of the same kind would also produce, $h_{ik}$ a stable effect \emph{specific} to the intended computation, and noise $e_{ik}^{v}$.
The random terms are centered over items, so $h$ excludes mean effects: any mean computation-specific contribution is part of $\mu_k$.
The common mean and prompt-invariant stable effects are assumptions for the fixed-order comparison.
For a layer program, the intended computation is the input-dependent update that the skipped or repeated layers write at the segment exit; since $g$ is defined by a class of non-specific perturbations, $h$ is defined only relative to that class, which a placebo makes explicit.

\begin{observation}[Non-identifiability under the additive decomposition]
\label{prop:nonid}
Under \cref{eq:model}, however many prompts, seeds or resamples of the real family are observed, their joint distribution depends on $g$ and $h$ only through $g+h$.
No statistic of the real family's outcomes, including same- and cross-prompt headroom and resampling estimates, can separate the specific from the non-specific part of the stable effect.
\end{observation}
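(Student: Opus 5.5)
The argument is a sufficiency (factorization) argument: every observable generated by the real family is a measurable function of $(\mu_k, g_{ik}+h_{ik}, e_{ik}^v)$ entering only through the sum in \cref{eq:model}. I will write $s_{ik}=g_{ik}+h_{ik}$ for the combined stable effect. The data from the real family, over any number of prompts $v$, seeds or resamples of items and actions, consist of the margins $U_i^v(a_k)$ and quantities computed from them. By \cref{eq:model}, each margin satisfies
\[
U_i^v(a_k)=U_i^v(a_0)+\mu_k+s_{ik}+e_{ik}^v .
\]
The baseline margins $U_i^v(a_0)$ do not involve $g$ or $h$. The noise $e_{ik}^v$ has a distribution that does not depend on $g$ or $h$ separately; this is the assumption I will make explicit. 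It is implicit in calling $e$ noise and in treating the stable terms as fixed item-by-action effects. It then follows that the joint law of $\{U_i^v(a_k)\}_{i,k,v}$ is the pushforward of the law of $(U^v(a_0), e^v)$ under a map that depends on $(\mu,s)$ only. Resampling items or actions, or adding seeds and prompts, only enlarges the index set or composes with a randomization that is independent of $(g,h)$, so the factorization through $s$ is preserved.

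Next I will note that $\Acc_i^v(a)=\mathbf 1[U_i^v(a)>0]$, the tie sets $A_i^s$, the static choice $a_S$ (chosen on development data generated the same way), and hence $\Grepro$, $\Graw$ and any resampling estimate are all measurable functions of these margins plus independent randomness. Their distributions therefore also depend on $(g,h)$ only through $s$.

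For the second sentence I will exhibit non-identification concretely. Take any pair $(g,h)$ and the alternative $(g',h')=(g+h,0)$, or $(0,g+h)$; both keep the centering constraints, since $g$ and $h$ are each centered over items. These pairs induce identical observational distributions but different specific parts, so no statistic, even a randomized one, can have a distribution that distinguishes them. In particular no estimator can be consistent for $h$, or for any functional of $h$ that is not a functional of $g+h$.

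The main obstacle is interpretive rather than technical. The decomposition leaves unspecified how $e$ may depend on the effects, and if the noise law were allowed to depend on $h$ (for example, heteroscedastic in the specific part), the observation would fail. I will state the independence of the noise law from the split as the reading of \cref{eq:model} under which the claim holds. I will also remark that this is exactly why a placebo family is needed: it is an external observation whose stable effect is modelled as $g$ alone, which breaks the symmetry.
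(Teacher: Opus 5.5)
Your proposal is correct and follows essentially the same route as the paper. You write $s=g+h$, note that $U_i^v(a_k)=U_i^v(a_0)+\mu_k+s_{ik}+e_{ik}^v$ makes every observable, and hence every headroom or resampling statistic, a functional of the law of $(U(a_0),\mu,s,e)$, and you exhibit the alternatives $(s,0)$ and $(0,s)$. Your explicit requirement that the noise law not depend on the split is the condition the paper builds in implicitly, by requiring the two decompositions to share the joint distribution of $(U(a_0),\mu,s,e)$, so stating it is a useful clarification rather than a departure.
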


Replacing $(g,h)$ by $(g+h,0)$ or by $(0,g+h)$ leaves the distribution of every observable unchanged (\cref{app:theory}).
This observation frames an identification limit of the decomposition without further restrictions on $g$ and $h$ (\cref{fig:mechanism}); it is not an empirical finding about their relative sizes.
Re-measurement can probe variation across prompts, but an effect shared by those prompts remains compatible with either source.
Changing shared structure probes a different property: an action that raises one answer letter can help on both prompts when the correct option keeps that letter, yet hurt after that option moves.
Interpreting an effect as computation-specific therefore requires additional assumptions or interventions beyond repeated observations of the real family.

\begin{figure}[t]
\centering
\includegraphics[width=0.92\textwidth]{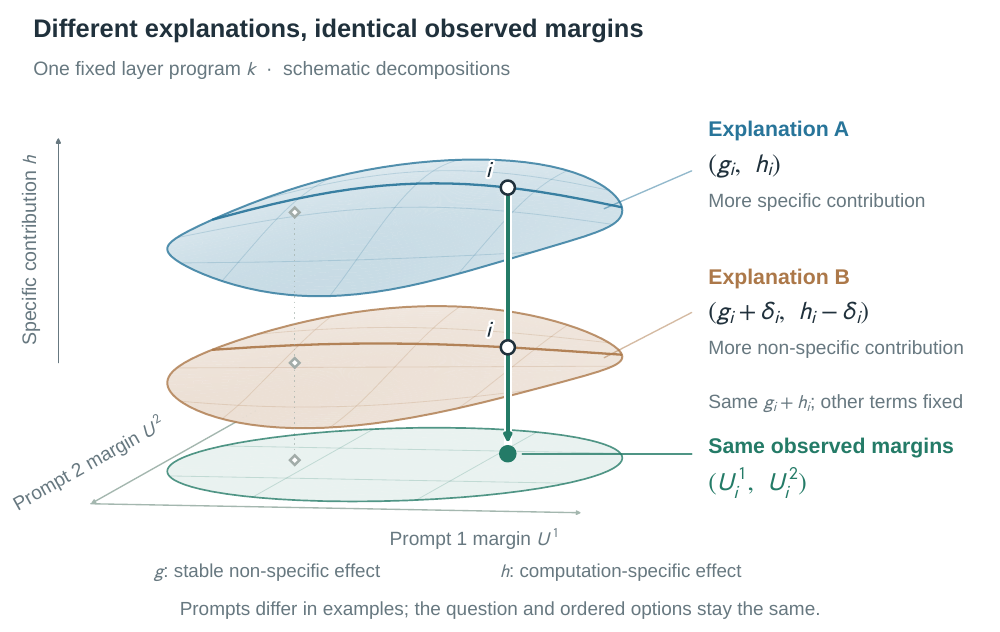}
\caption{\textbf{Different explanations, identical observed margins.}
A and B illustrate alternative decompositions of one layer program's effect.
The white circles track question $i$: replacing $(g_i,h_i)$ by $(g_i+\delta_i,h_i-\delta_i)$, with $\delta_i>0$, preserves $g_i+h_i$.
With all other terms fixed, both points project to the same observed margin pair $(U_i^1,U_i^2)$, shown in green.
Height represents the computation-specific contribution $h$, defined relative to the chosen class of non-specific perturbations.
The surfaces are schematic, not fitted functions or successive computations; A/B are not real/placebo conditions.
The prompts change the few-shot examples while preserving the question and ordered options.
Equality concerns margins, not full token distributions.}
\label{fig:mechanism}
\end{figure}

\paragraph{What a placebo can show.}
A placebo with an input-blind injected increment omits the corresponding input-dependent edit, giving $h\equiv0$ for that increment by construction.
The unmodified state and downstream model still depend on the input, and the oracle still selects actions using answers.
Any resulting headroom establishes what this control can produce; that existence claim needs no distributional match to the real family.
Attributing part of the real family's headroom to a particular source is a stronger task.
It would require assumptions about the control's stable and noisy effects, and the nonlinear oracle operation prevents a placebo-to-real ratio from being read directly as a fraction of the real headroom (\cref{app:theory}).
Therefore, $\rho$ and $\Delta$ serve as comparisons with named controls.

\section{Placebo comparisons}
\label{sec:method}
\paragraph{Actions and injection site.}
To compare real and placebo interventions at the same injection sites and with the same choice budget, the study pairs each layer program with one placebo action.
A layer program skips a contiguous segment of layers or repeats it once, and then runs the remaining layers as usual.
Its effect enters the residual stream at one well-defined point, the exit of the segment: with $z_t$ the unmodified state there at token position $t$, the program replaces it by $z_t+\delta_{k,t}$.
A placebo action replaces $z_t$ by $z_t+\tilde\delta_{k,t}$ at the same point and runs the same downstream layers.
Both menus also include the unmodified action $a_0$.

\paragraph{The main control: a fixed random direction.}
To measure how much oracle headroom can arise from input-blind updates, the main study uses \RDf{}, whose injected increment reads neither the current input state nor its answer.
Its injected increment is
\[
\tilde\delta_{k,t}=c_k\,r_k(t)\,\hat u_{k,t}.
\]
Here $r_k(t)$ is a norm profile of the real update, estimated on separate calibration items as a function of distance from the sequence end, and $c_k$ is a program-level scale.
Each layer receives a fixed random basis with one row per distance from the sequence end; a program's unit direction $\hat u_{k,t}$ is the normalized signed sum of these rows over the layers it edits ($-1$ for skipping, $+1$ for repetition).
This preserves correlations between overlapping programs and gives opposite directions to the skip and repeat of the same segment.
The direction, norm profile and scale are fixed across evaluation items and prompts.
The injected increment is identical on the text shared by an item's two prompts because positions are aligned from the sequence end.

\paragraph{Calibration and held-out checks.}
To align how often real and placebo actions change the predicted answer, the protocol calibrates placebo scales on separate items and checks the resulting answer-change rates on held-out evaluation items.
Calibration uses no answer labels.
The \emph{answer-change rate} (churn) is the fraction of item--prompt pairs whose predicted letter differs from the unmodified pass.
This statistic records how often an action can alter the accuracy outcome.
On evaluation items, the same label-free check requires at least 30 of 32 programs within tolerance, all 32 within a broad band, and a median churn ratio in $[0.9,1.1]$ (\cref{app:placebos}).
A family that fails is excluded whole from the outcome comparison and reported as a failed check; dropping individual actions would change the choice budget.
Passing this rule checks one effect statistic, rather than the joint distribution of stable and prompt-varying effects.
Independent redraws repeat calibration and the held-out checks (\cref{sec:redraw,app:redraw}).

\begin{table}[t]
\centering
\caption{Placebo definitions.}
\label{tab:placebos}
\begin{minipage}{396pt}\approvedtablefont
\begingroup\fontsize{8.5}{10}\selectfont
\setlength{\tabcolsep}{3pt}\renewcommand{\arraystretch}{1.04}
\begin{tabular*}{\linewidth}{@{\extracolsep{\fill}}>{\raggedright\arraybackslash}p{157pt}cccc@{}}
\toprule
\textbf{Property} & \textbf{\RDf} & \textbf{\RDi} & \textbf{\SH} & \textbf{\Rot} \\
\midrule
Study menu & 32 programs & 3 programs & 3 programs & 3 programs \\
Effect-size match & answer changes & KL & KL & KL \\
\midrule[0.3pt]
Direction shared by all items & yes & no & no & no \\
Same perturbation on shared text & yes & yes & yes & no \\
Model-generated update & no & no & other input's & own, rotated \\
Reads current input & no & no & no & yes \\
\bottomrule
\end{tabular*}\endgroup

\par\vspace{5pt}\fontsize{8.5}{10}\selectfont\raggedright
Shared text is common to both prompts of an item. For \Rot{}, the rotation matrix is shared while the input's own update varies across inputs and prompts. For \SH{} and \Rot{}, the absence of computation-specific effects remains an assumption.
\end{minipage}
\end{table}

\paragraph{Additional control constructions.}
To examine headroom under additional control constructions, a smaller three-program study evaluates item-specific random directions, updates borrowed from other inputs, and rotated real updates (\cref{tab:placebos,app:placebos}).
\RDi{} draws a random direction separately for each item and action, keeping it shared across that item's prompts.
\SH{} injects the real action's update from another, length-matched input.
\Rot{} injects the current input's update after a shared Haar-random rotation, $\tilde\delta_{k,t}=c_k Q\delta_{k,t}$.
For \SH{} and \Rot{}, absence of computation-specific effects is an assumption: one carries a model-generated update and the other reads the current input.
This study calibrates and checks median output KL divergence from the unmodified pass.
It provides comparisons under additional constructions; it does not isolate the effect of changing the calibration statistic.

\paragraph{Rotated options.}
To assess whether selected gains depend on shared option order, the second design applies a nonzero cyclic rotation to the options in prompt 2.
For an item with $n>1$ options the shift $r_i\in\{1,\dots,n-1\}$ is derived from the item's identifier, so that every option, and with it the correct answer, moves to another letter.
The few-shot examples, prompt 1, the placebo scales and the static actions are unchanged, and the matching rule is applied again.

\paragraph{Inference.}
To quantify uncertainty in real--placebo comparisons while preserving their pairing, the analysis uses bootstrap resampling with the same sampled items for both families.
In the large-menu study, evaluation items were never used for calibration, development or the choice of static actions; in the three-program study the static action is chosen on the selection prompt (\cref{app:ladder}).
Tasks are weighted equally; primary contrasts use the tie-average oracle and the static actions chosen on development items, and the other variants are sensitivity analyses.
The half-headroom threshold, tie rule, static-action rule and churn calibration preceded development outcomes. The original study stopped when its joint power requirement, including equivalence, could not be met; after inspection of development results, a separate half-headroom confirmation was specified on untouched evaluation items (\cref{app:analysis-timing}).
Option rotation, item-decoupled comparisons and the new paired intervals and tie-type decomposition were later extensions; the two fresh control draws and the separate transfer confirmation were each specified before their own new evaluations.
Intervals are 95\% percentile intervals from 10,000 paired item bootstrap replicates (stratified by task in the large-menu study); with three placebo comparisons, Holm-corrected results are also reported.
A sensitivity analysis resamples MMLU-Pro categories and BBH tasks as clusters (\cref{app:large,app:ladder}).
All intervals condition on the realized placebo draws, fitted scales, prompts and any development-fitted quantities, including static actions and letter offsets.
Bold table entries mark larger displayed point estimates under the comparison specified in each table, not statistical significance; displayed ties are unmarked.

\FloatBarrier
\section{Experiments}
\label{sec:experiments}

\subsection{Setup}
\label{sec:setup}

The experiments evaluate frozen Qwen3-4B-Base \citep{yang2025qwen3} and Llama-3.1-8B \citep{grattafiori2024llama3}, in bf16 with the language-model head in fp32.
The main menu contains 32 single-segment programs: skip or repeat once, segments of 1--4 layers, placed at the centres of four equal depth bands.
The main study compares this menu with \RDf{} on MMLU-Pro \citep{wang2024mmlupro}, the letter-choice tasks of BBH \citep{suzgun2023bbh} and ARC-Challenge \citep{clark2018arc}.
Placebo scales are calibrated on 1,000 label-free items; static actions are chosen per model, task and family on 1,000 separate development items.
The evaluation set contains \val{L.nd} items per task, \val{L.items} in total, shared by both models, with equal task weights.
Both menus are evaluated with shared option order and with prompt 2's options rotated (\cref{sec:method}).

Besides reporting both families' headroom, the analysis tests the specified threshold of half the real headroom:
$C=\Grepro(\RDm)-\tfrac12\Grepro(\Real)$, one-sided at level 0.025 on each model.
Reaching this threshold establishes a substantial effect relative to the real-program comparison; it is not a decomposition of that effect.
The detailed letter-offset and tie diagnostics below use the original RD draw; submenu results cover all three draws.
\Cref{sec:redraw} reports the independent redraws.

\subsection{Headroom with shared option order}
\label{sec:large}

\begin{table}[!htbp]
\centering
\caption{Large-menu study: headroom and design sensitivity (original RD draw).}
\label{tab:large}
\begin{minipage}{396pt}\approvedtablefont
\begingroup\fontsize{8}{9.5}\selectfont
\setlength{\tabcolsep}{1.5pt}\renewcommand{\arraystretch}{1.03}
\begin{tabular*}{\linewidth}{@{\extracolsep{\fill}}lcc@{\hspace{5pt}}cc@{\hspace{5pt}}cc@{}}
\toprule
 & \multicolumn{2}{c}{Same prompt $\Graw$} & \multicolumn{2}{c}{\textbf{\textit{G}\textsubscript{repro}: tie average}} & \multicolumn{2}{c}{$\Grepro$: $a_0$ kept} \\
\cmidrule(lr){2-3}\cmidrule(lr){4-5}\cmidrule(lr){6-7}
Family & Fixed & Rotated & Fixed & Rotated & Fixed & Rotated \\
\midrule
\multicolumn{7}{@{}l}{\textbf{Qwen3-4B-Base}} \\[1pt]
Real & \inlineci{29.1}{28.0}{30.3} & \inlinecib{29.3}{28.3}{30.3} & \inlineci{9.0}{8.1}{9.9} & \inlinecib{-3.3}{-4.0}{-2.7} & \inlineci{11.4}{10.5}{12.3} & \inlinecib{1.3}{0.7}{2.0} \\
\RDf & \inlinecib{29.2}{28.1}{30.4} & \inlineci{29.0}{28.0}{30.0} & \inlinecib{11.8}{10.9}{12.8} & \inlineci{-5.7}{-6.3}{-5.1} & \inlinecib{13.9}{13.0}{14.8} & \inlineci{-0.7}{-1.3}{-0.1} \\
$\Delta$ & \inlineci{-0.1}{-1.0}{0.8} & \inlineci{0.3}{-0.5}{1.1} & \inlineci{-2.8}{-3.8}{-1.9} & \inlineci{2.3}{1.6}{3.0} & \inlineci{-2.5}{-3.5}{-1.6} & \inlineci{2.0}{1.3}{2.7} \\
\addlinespace[4pt]
\multicolumn{7}{@{}l}{\textbf{Llama-3.1-8B}} \\[1pt]
Real & \inlineci{35.5}{34.3}{36.6} & \inlinecib{36.4}{35.4}{37.5} & \inlineci{10.1}{9.1}{11.0} & \inlinecib{-2.5}{-3.2}{-1.8} & \inlineci{11.2}{10.2}{12.1} & \inlinecib{0.4}{-0.4}{1.2} \\
\RDf & \inlineci{35.5}{34.3}{36.7} & \inlineci{36.0}{34.9}{37.1} & \inlinecib{15.6}{14.5}{16.6} & \inlineci{-6.1}{-6.7}{-5.5} & \inlinecib{16.2}{15.2}{17.2} & \inlineci{-2.7}{-3.3}{-2.0} \\
$\Delta$ & \inlineci{0.0}{-1.1}{1.1} & \inlineci{0.4}{-0.5}{1.4} & \inlineci{-5.5}{-6.6}{-4.4} & \inlineci{3.7}{2.8}{4.5} & \inlineci{-5.0}{-6.1}{-3.8} & \inlineci{3.1}{2.3}{4.0} \\
\bottomrule
\end{tabular*}\endgroup

\par\vspace{5pt}\fontsize{8.5}{10}\selectfont\raggedright
Accuracy headroom over each family's development-selected static action (per model and task), in percentage points [95\% paired, task-stratified item-bootstrap CI]. $\Graw$ selects and scores on the same prompt; $\Grepro$ selects on one and scores on the other, averaging directions. Tie average averages uniformly over best-accuracy actions; $a_0$ kept retains the unmodified pass when among them. Fixed preserves option order; Rotated applies a nonzero cyclic rotation on prompt 2. Both models use 32 programs and the same 4,413 items with equal task weights. $\Delta$ = Real $-$ \RDf{}. Bold marks the larger displayed Real/\RDf{} estimate.
\end{minipage}
\end{table}

\paragraph{Input-blind increments produce substantial headroom.}
The real programs change the answer on \val{qwen.churn.min}--\val{qwen.churn.max}\% of item--prompt pairs on Qwen3-4B-Base and \val{llama.churn.min}--\val{llama.churn.max}\% on Llama-3.1-8B.
On evaluation items, \RDf{} matches all 32 programs within the churn tolerance on both models, with median ratios \val{qwen.val.median} and \val{llama.val.median}.
For the original draw with shared option order, the families' same-prompt headroom differs by at most 0.1 point in displayed point estimate, and both have large cross-prompt headroom (\cref{tab:large}).
The placebo reaches the half-real threshold on both models: $C=\val{qwen.C}$~\val{qwen.C.ci} and $\val{llama.C}$~\val{llama.C.ci} points, with clustered lower bounds \val{qwen.C.clo} and \val{llama.C.clo}.
Across the full menus, at least one real program is correct on \val{qwen.v12.rescue.real}\% and \val{llama.v12.rescue.real}\% of the item--prompt pairs that the unmodified model gets wrong, and at least one placebo on \val{qwen.v12.rescue.rd}\% and \val{llama.v12.rescue.rd}\%.
The half-real threshold is also exceeded in both fresh draws per model (\cref{sec:redraw}).
Thus substantial oracle headroom, including cross-prompt gains, is possible with an input-blind injected increment.

\paragraph{The ordering depends on the menu.}
\RDf{} has larger headroom on the full and skip-only menus in every draw (\cref{fig:menus} shows draw 1).
On Llama's repeat-only menu, real programs lead in all three draws; on Qwen's repeat menu, the original interval includes zero but draws 2--3 favour \RDf{}.
Llama's real advantage on the low-churn menu is detected only in draw 1; its later intervals and all three Qwen intervals include zero (\cref{tab:submenus}).
The half-real contrast $C$ has a positive pointwise 95\% lower bound in every model, submenu and draw, including the menus restricted to real calibration churn at most 30\%.
These are exploratory comparisons, rather than simultaneous confidence claims across submenus.
Submenus overlap and do not decompose full-menu headroom.
Per-task results, menu-size curves and the margin ranges contributing most to the gains are reported in \cref{app:large}.

\begin{figure}[t]
\centering
\includegraphics[width=\textwidth]{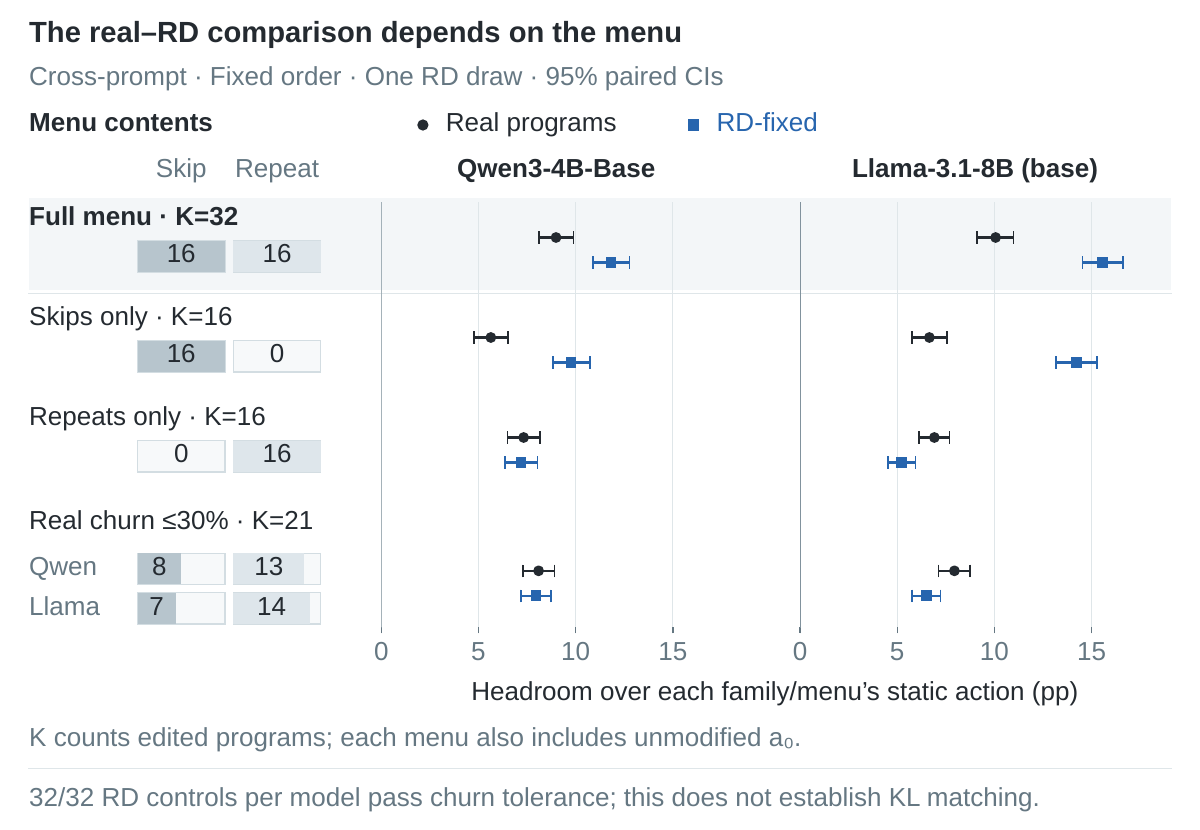}
\caption{\textbf{Cross-prompt headroom depends on the action menu.}
Left: counts of skip and repeat programs, each on a common 0--16 scale. Every menu also includes $a_0$.
Low-churn membership uses the real programs' answer-change rates of at most 30\% on independent calibration items; the paired RD menu uses the same program IDs within each model.
Right: headroom over each family/menu/task's development-selected static action, with fixed option order, tie averaging and equal task weights.
Points and 95\% task-stratified paired item-bootstrap intervals use \val{L.items} held-out items per model and the original RD draw.
Menus overlap; the models share the program-grid rule but use different physical layer segments.
Calibration matches churn only: program-level median KL(RD)/median KL(Real) ranges from 0.19 to 4.84 for Qwen and from 0.22 to 7.71 for Llama, measured against the unmodified full-vocabulary output at the answer position.}
\label{fig:menus}
\end{figure}

\paragraph{What the comparison establishes.}
Passing the churn rule does not make \RDf{} a distributionally matched replacement for the real programs.
The families differ in KL scale and cross-prompt variability (\cref{app:theory}), and the original RD draw requires scales up to \val{qwen.cal.c.max} and \val{llama.cal.c.max} times the real update's median norm; Qwen's draws 2 and 3 need at most \val{redraw@qwen.d23.scale.max} times, yet still exceed real-program headroom (\cref{app:redraw}).
In the original draw, a joint covariance test rejects a testable implication of the matched-placebo assumptions on both models after Holm correction (\cref{app:matching-check}).
Its shared directions can also favour cross-prompt reliability.
The result therefore establishes what this specified control can produce, without estimating the non-specific share of real-program headroom.

\subsection{Shared answer structure and option rotation}
\label{sec:mechanism}

\paragraph{A sufficient source of cross-prompt gains.}
An action that consistently increases one answer letter's log-probability can help the same item on both prompts when the correct option keeps that letter.
An oracle can select such an action using the answer even though the increment itself does not read the question.
To measure this possible source of headroom, the analysis uses a menu of input-independent letter offsets, fitted on development items separately for each action, task, option count and position, and evaluated with its own static reference.
Their point estimates range from \val{L.oos.pct.min} to \val{L.oos.pct.max}\% of their source families' headroom; the four paired intervals are reported individually in \cref{tab:offset-ratios} (original RD draw).
Fixed letter offsets are therefore sufficient to produce headroom of this scale.
They do not give a decomposition of the observed gains: subtracting mean letter offsets still leaves substantial headroom (\cref{app:large}).

\paragraph{Changing the shared option order.}
Rotating prompt 2's options reduces cross-prompt headroom by \val{qwen.v11.Grepro_drop_real} and \val{llama.v11.Grepro_drop_real} points for the real programs, and by \val{qwen.v11.Grepro_drop_rd} and \val{llama.v11.Grepro_drop_rd} for the placebo.
Same-prompt headroom and prompt-2 unmodified accuracy change by at most about one point in point estimate (\cref{fig:permuted,app:large}).
The placebo still passes the churn rule.
Under tie averaging, both families' cross-prompt headroom is negative relative to their development-selected static actions, with all four 95\% intervals below zero.
For the real programs it is \val{qwen.v11.Grepro_real_perm}~\val{qwen.v11.Grepro_real_perm.ci} points on Qwen and \val{llama.v11.Grepro_real_perm}~\val{llama.v11.Grepro_real_perm.ci} on Llama.
This level has no calibrated null interpretation: every correct option moves, so a letter preference that helped on prompt 1 can hurt on prompt 2.
Keeping $a_0$ on accuracy ties instead gives real-program headroom of \val{qwen.v11.Grepro_a0_real_perm}~\val{qwen.v11.Grepro_a0_real_perm.ci} and \val{llama.v11.Grepro_a0_real_perm}~\val{llama.v11.Grepro_a0_real_perm.ci} points, respectively (\cref{tab:large}), while retaining the large drop under rotation.

\begin{figure}[t]
\centering
\includegraphics[width=\textwidth]{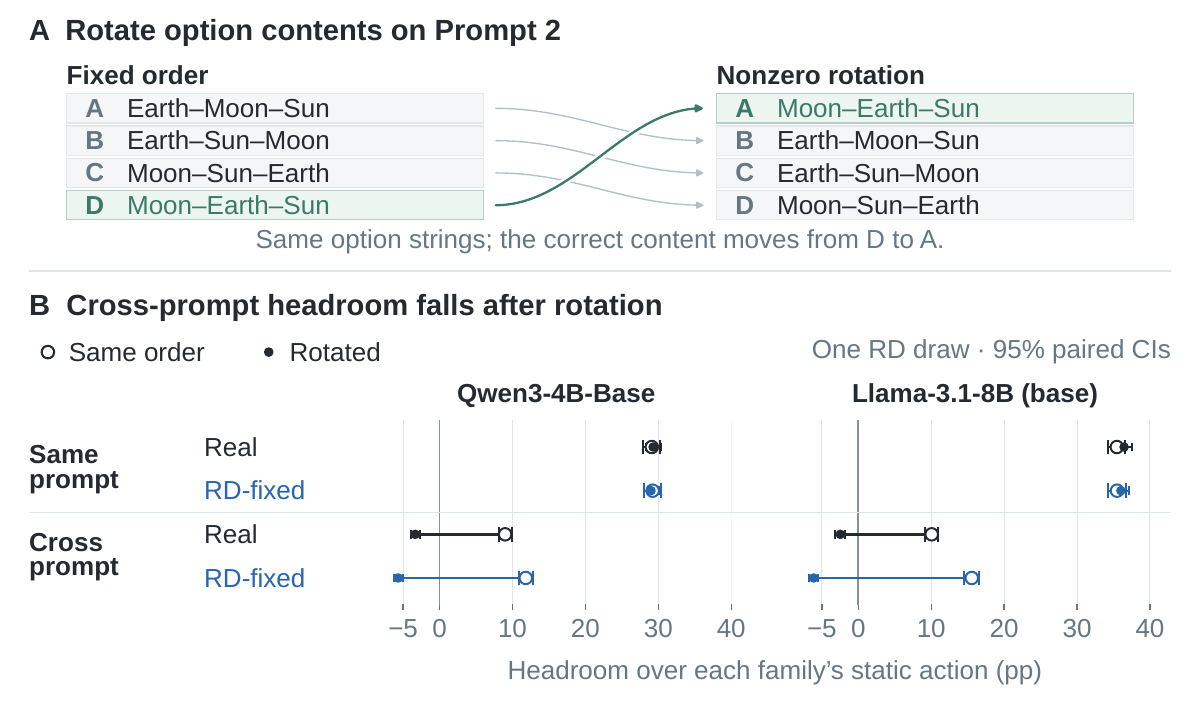}
\caption{\textbf{Option-order changes affect cross-prompt headroom.}
(A)~The lunar-eclipse example from \cref{fig:overview} illustrates a nonzero cyclic rotation on prompt 2: correct content moves from D to A. This is a schematic intervention, not a new measured outcome for that item.
(B)~Same-prompt and cross-prompt headroom over each family's development-selected static action.
Open circles show shared order; filled circles show rotated order, with horizontal segments spanning the measured differences on a common numerical scale.
Results use the full 32-program menu plus $a_0$, tie averaging, \val{L.items} held-out items per model and equal task weights.
Intervals are 95\% task-stratified paired item-bootstrap intervals (original RD draw).
Zero denotes equality with the static reference, not a validated null for computation-specific effects.
\Cref{fig:design-reference} gives the mixture and reference sensitivity analyses, including Llama's item-decoupled mixture contrast whose interval contains zero.}
\label{fig:permuted}
\end{figure}

\paragraph{The residual comparison remains conditional on the design.}
With the primary tie-average rule and static references, real programs' headroom is less negative than \RDf{} after rotation:
$\Delta=\val{qwen.pm.D}$~\val{qwen.pm.D.ci} and $\val{llama.pm.D}$~\val{llama.pm.D.ci} points (clustered intervals: \val{qwen.v14.Dperm.cci} and \val{llama.v14.Dperm.cci}).
This is a difference between two headrooms, each measured against its own family's static action.
It does not isolate computation-specific effects: the placebo has stronger average letter preferences, so rotation can penalise it more.
After subtracting development-fitted mean letter offsets and choosing static actions on the adjusted development scores, rotated $\Delta$ is \val{qwen.adj.main.Delta_adj_perm}~\val{qwen.adj.main.Delta_adj_perm.ci} and \val{llama.adj.main.Delta_adj_perm}~\val{llama.adj.main.Delta_adj_perm.ci} points (draw 1; \cref{tab:adjusted}).
This adjustment changes the comparison; it does not isolate a computation-specific remainder.

A sensitivity analysis reweights the stored outputs to retain the original order with probability $1/n$ for an item with $n$ options.
Under the same static references, the real-minus-RD headroom difference decreases to \val{qwen.cpu.mix.D} and \val{llama.cpu.mix.D} points.
The real family's headroom is \val{qwen.cpu.mix.Greal}~\val{qwen.cpu.mix.Greal.ci} points on Qwen and \val{llama.cpu.mix.Greal}~\val{llama.cpu.mix.Greal.ci} on Llama: the first is negative and the second has an interval containing zero.
Using an item-decoupled reference instead asks a different question, comparing the selected action with selections borrowed from other items.
Under that reference, the interval for Llama's mixture contrast includes zero (\cref{fig:design-reference,app:large}).
The mixture approximately balances correct letters across items, but does not establish a universal zero point for the controls.
Together these analyses show substantial sensitivity to shared answer structure, while leaving the source of the remaining family difference unresolved.
Full results for alternative references and ties, and the semi-synthetic decision-calibration and power analyses, are in \cref{app:large}.

\subsection{Variation across draws and control constructions}

\paragraph{Random-direction redraws.}
\label{sec:redraw}
Each model has three independent draws of the random bases, with scales recalibrated and placebo static actions reselected for each draw; all pass the churn matching rules in both designs.
Across the draws, $C$ ranges from \val{redraw@qwen.C1.range} points for Qwen and from \val{redraw@llama.C1.range} for Llama, and rotated $\Delta$ from \val{redraw@qwen.D_perm.range} and from \val{redraw@llama.D_perm.range} points; every draw has positive 95\% lower bounds for both contrasts under item and cluster resampling (\cref{app:redraw}).
The draws share items and real-program outputs. Treating directions as random, a rough interval keeps $C>0$ but includes zero for Qwen's fixed-order and rotated $\Delta$ (\cref{app:redraw}).
Redrawing checks sensitivity to random directions, not distributional matching or the source of $\Delta$.

\paragraph{Additional controls in a smaller menu.}
\label{sec:ladder}
A supplementary comparison evaluates \RDi{}, \SH{} and \Rot{} for three Qwen3-4B-Base programs: skip layers 16--19, repeat 16--19 and repeat 28--31, on MMLU-Pro, BBH and AQuA-RAT \citep{ling2017aqua} (300 calibration and 1,800 evaluation items; \cref{app:ladder}).
All three families pass the median-KL rule and reach \val{S.rcross.min}--\val{S.rcross.max}\% of the real programs' cross-prompt headroom in point estimate.
Real-program point estimates are higher, but no real-minus-placebo contrast survives Holm correction under either static-action rule (\cref{tab:ladder}).
The intervals permit both no real advantage and advantages of several points; they do not establish equivalence.
Because this study changes the control construction, matched statistic, menu, tasks and model coverage together, it cannot isolate which change accounts for the different ordering.

\section{Transfer of selected problem--program associations}
\label{sec:transfer}

Beyond letter-scored multiple choice, a practical question remains: does a program selected for a problem retain an advantage over programs selected for other problems when the wording changes?
A supplementary experiment tests this with generated math answers, using programs from a public reconstruction of PoLar's search \citep{westerhoff2026programs}.
The comparison measures transfer of the selected problem--program association and has no placebo.

The transfer study uses the released search logs for the post-trained Qwen3-8B release \citep{yang2025qwen3} on GSM8K-sourced DART-Math problems \citep{tong2024dartmath,cobbe2021gsm8k} that the unmodified model answers wrongly and the search rescues.
The study uses each problem's first rescuing program and retains problems whose rescue was re-confirmed (\val{T.reconf.pct}\% were) and whose rewordings passed mechanical checks and two automatic LLM reviewers (\cref{app:transfer}).
Problems are sorted by the number of executed layers and cut into consecutive groups of four.
On each kept rewording, the problem’s own program is compared with the three other programs in its group (\emph{donors}); $\Tres$ is the own program's accuracy minus the donors' mean, averaged within and then across problems, with intervals that resample whole groups.
After an exploratory pilot, a confirmation on 200 fresh problems was pre-specified, with success defined by a 95\% interval for $\Tres$ whose lower end exceeded 10 percentage points.

\paragraph{Results and interpretation.}
Own programs retained a correct answer on \val{T.v15.own_accuracy}\%~\val{T.v15.own_accuracy.ci} of the kept rewordings (post hoc group-bootstrap interval); each had answered its original problem correctly by selection.
Donors answered \val{T.acc.donor}\% and the unmodified model \val{T.acc.identity}\% correctly (\cref{tab:transfer} in \cref{app:transfer}).
$\Tres=\val{T.T}$ points (95\% interval \val{T.ci95}) meets the pre-specified criterion.
Post hoc paired comparisons give own minus unmodified of \val{T.v15.own_minus_unmodified}~\val{T.v15.own_minus_unmodified.ci} points and unmodified minus donors of \val{T.v15.unmodified_minus_donors}~\val{T.v15.unmodified_minus_donors.ci}.
Thus most of the point-estimated own--donor advantage is a gain over the unmodified model; the donor deficit's interval includes zero.
The advantage remains in groups whose programs execute the same number of layers (\val{T.strict.T}~\val{T.strict.ci}) and, in the pre-specified paired secondary comparison on the \val{T.style.n} problems with both styles retained, is smaller on restructured than on sentence-by-sentence rewordings (by \val{T.style.diff} points \val{T.style.ci}).

The selected association therefore transfers within this pool of rescued problems, despite the large loss from the original, selected 100\% correctness.
The style gap is compatible with sensitivity to surface wording; it does not isolate which wording features matter.
Its source remains open: a fixed perturbation selected for a problem might also transfer, and the rewordings preserve the answer, allowing a stable preference for the answer string to persist. This study tested neither explanation.
Each program was selected by an answer-informed search on the original wording, so it does not show that a selector could find useful programs for new problems.

\section{Related work}
\label{sec:related}

\paragraph{Per-input adaptive computation.}
Per-input chains of skipped and repeated layers are searched by CoLa \citep{li2025cola}, predicted by PoLar \citep{li2026polar} and used to supervise routers by Dr.LLM \citep{heakl2026drllm}, whereas MACRO learns one route distribution per task \citep{batorski2026macro}; related work learns per-token recursion depth \citep{bae2025mor}, loops blocks without training \citep{chen2026tflt} or reuses decoder blocks to diversify sampled candidates \citep{singh2026archsampling}, and early-exit work uses oracle exits as bounds or supervision \citep{elbayad2020depth,schuster2022calm}.
Frozen models tolerate deleting and reordering layers \citep{gromov2025unreasonable,lad2025robustness,sun2025painters}, although replacing middle layers with copies of one layer is harmful \citep{sun2025painters}.

\paragraph{Interpreting oracle gains.}
In routing among models, \citet{chen2026routing} separates the hindsight oracle from the single-commit ceiling under stochastic decoding, and \citet{shihab2026opportunity} give selection-valid intervals for the oracle gap and separate opportunity from the gain realizable from a pre-answer signal; identical copies of one generated LLM harness already yield oracle headroom \citep{xu2026harness}.
\citet{li2026reachability} study question-level reachability over layer routes with generated answers and report that random routes match or exceed structured route search under a matched budget; their random routes are themselves layer programs.
\citet{westerhoff2026programs} reconstruct PoLar's search across five models and report strong option-order sensitivity of rescuing programs on MMLU-Pro and that their reproduced router's top predictions concentrate on the identity program.
The transfer test in \cref{sec:transfer} uses their released programs.
This study compares layer programs with perturbation controls injected at the same site.
These controls replace the intended edit. For \RD, the injected update reads nothing from the current input, making it possible to examine what headroom can arise without that computation.

\paragraph{Sanity checks, selection and controls.}
Placebo comparisons follow work that examines what widely used measurements show \citep{adebayo2018sanity,schaeffer2023mirage,agarwal2021precipice}.
The best of several noisy options is optimistic \citep{smith2006optimizer,andrews2024inference}, as are a classifier pool's per-instance oracle \citep{souza2017oracle} and a virtual best solver from single runs \citep{cameron2016bias}; the discount on re-measurement is regression toward the mean \citep{kelley1927}.
The placebos studied here are active placebos, related to negative controls \citep{lipsitch2010negative,sofer2016negative}; $\Delta$ and $\rho$ compare real actions with named controls, as selectivity compares a probe with control tasks \citep{hewitt2019control}.
In interpretability, noise and resampled corruptions lead to different conclusions \citep{meng2022rome,chan2022causal,zhang2024patching}, and another input's layers serve as a steering control \citep{nelwan2026steering}; random perturbations also raise best-of-several performance \citep{gan2026thickets}, sharply so for sandbagging models \citep{tice2025noise}.
Multiple-choice scoring carries option-letter and option-order biases \citep{zheng2024mcq,pezeshkpour2024order}, and few-shot predictions input-independent label biases \citep{zhao2021calibrate}; in the present setting they are one route to headroom.

\section{Discussion}
\label{sec:discussion}

\paragraph{Interpreting oracle gains.}
Shared answer structure can support substantial headroom, as the input-blind controls, fixed letter offsets and option rotation show.
Residual differences can reflect computation-specific effects, unaccounted non-specific sensitivity, or both.
The same-prompt oracle bounds a router's accuracy over its menu, without identifying the source of that opportunity.
Across the two models and two reported tie rules, the largest full-menu rotated headroom is \val{qwen.v11.Grepro_a0_real_perm} points, with interval \val{qwen.v11.Grepro_a0_real_perm.ci}, relative to the development-selected static actions.
This motivates checking option-order sensitivity before using such oracle choices as router labels; cross-prompt headroom is not an upper bound on a learned router's gain.

\paragraph{Recommendations.}
Report the action budget, static reference and tie rule, together with the control construction, calibration statistic and held-out matching checks.
Use independent control draws to assess direction sensitivity, and re-measure selections after changing relevant shared structure, such as option order.
A held-out letter-offset menu tests one sufficient source of headroom; \cref{app:ladder} gives an analogous output-rescaling check for margin scores.
A persistent real-program advantage can support specificity relative to the tested controls only under the assumptions that justify that comparison.

\paragraph{Limitations.}
The placebo evidence in this study covers direct-answer multiple choice scored by letter log-probabilities, base models of 4B and 8B parameters, and menus of at most 32 single-segment programs, including programs that change most answers.
This study does not evaluate these controls on CoLa or PoLar's published systems, generative oracles, or oracles defined by compute savings or agreement with the full model.
The transfer test covers one model, short direct answers to grade-school problems and automatically reviewed rewordings within a pool selected for successful rescue.
The design with rotated option order does not separate answer letters from option positions and has no natural null.
The controls are calibrated on one statistic, which does not establish distributional matching.
The large-menu covariance test rejects an implication of the stronger matched-placebo assumptions (\cref{app:matching-check}).
In the original large-menu draw, \RDf{} has lower prompt-to-prompt noise (median variance ratios \val{qwen.diag.noise} and \val{llama.diag.noise}), which can favour cross-prompt reliability; its construction also gives the skip and repeat of a segment opposite directions.
Letter-offset, covariance and tie-type diagnostics use the original draw; three draws give limited evidence about sensitivity to random directions.
Intervals condition on fitted controls and prompts, and both models share items.
Public benchmarks may appear in pretraining data; this study has not measured how such exposure affects these comparisons.

\section{Conclusion}
With shared option order, input-blind controls exceeded real-program headroom in every full-menu draw, while Llama's repeat-only menu favoured real programs in a post hoc analysis.
Rotation sharply reduced headroom and left a positive full-menu real-minus-control difference.
Neither ordering identifies a computation-specific benefit: in letter-scored multiple choice, large gains with shared option order are insufficient evidence by themselves.
These controlled-menu findings do not test CoLa or PoLar's published systems.
In the supplementary generated-answer test, selected programs answered \val{T.v15.own_accuracy}\% of rewordings correctly (donors: \val{T.acc.donor}\%), without a placebo comparison.

\FloatBarrier
\bibliographystyle{tmlr}
\bibliography{refs}

\newpage
\appendix
\section{Theory}
\label{app:theory}

\begin{table}[htbp]
\centering
\begin{minipage}{396pt}\approvedtablefont
\caption{\textbf{Notation used in the comparisons.}}
\label{tab:notation}
\begingroup\fontsize{9}{11}\selectfont
\setlength{\tabcolsep}{4pt}\renewcommand{\arraystretch}{1.15}
\begin{tabular*}{\linewidth}{@{\extracolsep{\fill}}p{109pt}p{275pt}@{}}
\toprule
Symbol & Meaning \\
\midrule
$a_0,F,K$ & Unmodified pass, action family, and number of edited candidates (excluding $a_0$). \\
$U_i^v(a),\Acc_i^v(a)$ & Correct-letter margin and 0/1 correctness for item $i$, prompt $v$ and action $a$. \\
$A_i^s,a_S$ & Best-accuracy tie set on selection prompt $s$; static action selected without evaluation outcomes. \\
$\Graw,\Grepro$ & Headroom over the static action when selection and scoring use the same prompt, or different prompts. \\
$C$ & Fixed-order half-headroom contrast: $\Grepro(\RDm)-\tfrac12\Grepro(\Real)$. \\
$\Delta,\rho$ & Real minus control headroom; control divided by real headroom. \\
$\mu_k,g_{ik},h_{ik},e_{ik}^v$ & Mean effect; centered stable non-specific and specific effects; prompt-varying noise. \\
$c_k,r_k(t),\hat u_{k,t}$ & Calibrated scale, norm profile, and fixed random unit direction (one per distance from the sequence end) of the main control. \\
$\Tres$ & Accuracy advantage of a rescued problem's selected program over other problems' donor programs after rewording. \\
\bottomrule
\end{tabular*}\endgroup
\end{minipage}
\end{table}

\subsection{Argument for Observation~\ref{prop:nonid}}
Let the real family's observable data be the margins $U_i^v(a_k)$ for prompts $v=1,\dots,V$ (any $V\ge1$, including repeated draws or seeds).
Under \cref{eq:model}, $U_i^v(a_k)=U_i^v(a_0)+\mu_k+s_{ik}+e_{ik}^v$ with $s=g+h$, so the joint distribution of the observables is a function of the joint distribution of $(U(a_0),\mu,s,e)$.
Any two decompositions $s=g+h=g'+h'$ with the same joint distribution of $s$---for example $(g',h')=(s,0)$, no specific component, and $(0,s)$, an entirely specific one---induce the same distribution of every observable, so every functional of the observables takes the same value under both.
\hfill$\square$

Under the stated assumptions on stable effects and prompt-specific noise, resampling \citep{chen2026routing}, multiple seeds and cross-prompt evaluation can characterize the stable component, but do not separate $g$ from $h$.
Besides a comparison family, other designs can add the missing information, for example multiple proxies \citep{miao2018proxy}, items on which the specific effect is known to be zero, or empirical calibration against many controls \citep{schuemie2014calibration}.

\subsection{What a matched placebo identifies}
A placebo family has effects $x^{N,v}_{ik}=\mu^N_k+g^N_{ik}+e^{N,v}_{ik}$.

\begin{proposition}
\label{prop:matched}
Suppose that, for action $k$, $g^N_{\cdot k}$ has the same distribution as $g_{\cdot k}$ and $e^N$ as $e$, that $g$ is independent of $h$, and that the noise is independent of the stable effects and across prompts.
Then the cross-prompt covariance of per-item effects is $\operatorname{Var}(g_k)$ for the placebo and $\operatorname{Var}(g_k)+\operatorname{Var}(h_k)$ for the real action, so their ratio is the non-specific share of stable variance $\phi_k$, and $\operatorname{cov}_{N,k}\le\operatorname{cov}_{\Real,k}$.
If, jointly with the unmodified margins $U(a_0)$, the law of $(\mu^N,g^N,e^N)$ across actions equals that of $(\mu,g,e)$, then $\Grepro(N)$ has the distribution of the real family's cross-prompt headroom in the counterfactual without $h$.
For this headroom statement, the joint-law condition also covers any development outcomes used to select the static action; the same static-action selection and tie rules are applied in the placebo and no-$h$ counterfactual.
\end{proposition}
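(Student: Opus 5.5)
The proof splits into two parts: a covariance computation under the additive decomposition, and a distributional-equality argument for the headroom.

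\textbf{Covariance claim.} Fix action $k$. By \cref{eq:model}, the real effect on item $i$ in prompt $v$ is $x_{ik}^v=\mu_k+g_{ik}+h_{ik}+e_{ik}^v$. The placebo effect is $x^{N,v}_{ik}=\mu^N_k+g^N_{ik}+e^{N,v}_{ik}$.

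1. Treat items as draws from the population, so covariances are taken over $i$. The constant $\mu_k$ then drops out of every covariance.
2. Expand $\operatorname{cov}(x_{ik}^1,x_{ik}^2)$ bilinearly into its sixteen terms.
3. Kill the cross terms using the hypotheses:
   - $\operatorname{cov}(g,h)=0$ by independence of $g$ and $h$;
   - every term pairing $e^v$ with $g$ or $h$ vanishes because the noise is independent of the stable effects;
   - $\operatorname{cov}(e^1,e^2)=0$ because the noise is independent across prompts.
4. What remains for the real action is $\operatorname{Var}(g_k)+\operatorname{Var}(h_k)$. The same computation with $h\equiv0$ gives $\operatorname{Var}(g^N_k)$ for the placebo.
5. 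Since $g^N_{\cdot k}\overset{d}{=}g_{\cdot k}$, this equals $\operatorname{Var}(g_k)$.

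The ratio is therefore $\operatorname{Var}(g_k)/(\operatorname{Var}(g_k)+\operatorname{Var}(h_k))=\phi_k$. The inequality $\operatorname{cov}_{N,k}\le\operatorname{cov}_{\Real,k}$ follows from $\operatorname{Var}(h_k)\ge0$. I would note that the ratio is defined when the denominator is positive, and that the centering of $g,h$ lets variances be computed directly.

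\textbf{Headroom claim.} I would argue by pushforward, as in the argument for \cref{prop:nonid}.

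1. Write $\Grepro$ as a fixed measurable map $\Phi$ of the margin array $\{U_i^v(a)\}_{i,v,a}$ together with the development outcomes that determine $a_S$. By \cref{eq:headroom}, $\Phi$ is built from the indicators $\mathbf 1[U>0]$, the tie sets $A_i^s$ (an argmax), averages over ties and items, and the static-action rule. Because the same selection and tie rules are applied to both families, it is one map $\Phi$, not two.
2. In the counterfactual without $h$, the real margins are $U_i^v(a_0)+\mu_k+g_{ik}+e_{ik}^v$. The placebo margins are $U_i^v(a_0)+\mu^N_k+g^N_{ik}+e^{N,v}_{ik}$. Each is the same deterministic function $\Psi$ of the tuple (unmodified margins, mean, stable, noise) for the respective family.
3. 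By hypothesis, the joint law of $(U(a_0),\mu^N,g^N,e^N)$ across all actions, including the development data, equals that of $(U(a_0),\mu,g,e)$.
4. Hence $\Phi\circ\Psi$ has the same law under both families, which gives $\Grepro(N)\overset{d}{=}\Grepro(\Real\mid h\equiv0)$.

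\textbf{Main obstacle.} The delicate step is bookkeeping rather than algebra: ensuring the joint-law hypothesis really covers everything $\Phi$ reads. This includes the margins of all $K$ actions simultaneously, since the argmax couples the actions, and the development outcomes that fix $a_S$. It also requires that $\Phi$ itself, including the tie-averaging and the static-selection rule, be identical for both families.

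I would make this explicit by defining the full observable vector once and stating the selection rules as a single function applied to it. I would also stress that marginal per-action matching, which is what the covariance part uses, would not suffice here: joint equality across actions is needed because the oracle's maximum depends on the joint law.
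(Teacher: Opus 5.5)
Your proposal is correct and follows the same route as the paper's proof. The covariance part is the same bilinear expansion in which the noise terms and $\operatorname{Cov}(g_k,h_k)$ vanish, and the headroom part is the same observation: under the joint-law condition, which includes $U(a_0)$ and the development outcomes, the placebo's observables have the law of the no-$h$ real family, so the common selection-and-scoring map gives equal laws. Your pushforward formulation only makes explicit the bookkeeping that the paper states in words.
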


\emph{Proof.}
The noise contributes nothing to the cross-prompt covariance.
For the real action, $\operatorname{cov}(x_k^1,x_k^2)=\operatorname{Var}(g_k)+\operatorname{Var}(h_k)+2\operatorname{Cov}(g_k,h_k)$, and the last term vanishes because $g$ is independent of $h$; the placebo's covariance is $\operatorname{Var}(g^N_k)=\operatorname{Var}(g_k)$.
Under the joint condition the placebo family's observables have the distribution the real family's would have with $h\equiv0$.
The baseline margins must be part of the condition, because accuracy is $\mathbf 1[U(a_0)+x>0]$ and its headroom concentrates on items near the threshold (\cref{fig:margin}).
The counterfactual reapplies the stated static-action selection rule after removing $h$.
\hfill$\square$

In causal-inference terms, the placebos studied here are active placebos rather than negative controls in the sense of \citet{lipsitch2010negative}.
The matching condition is a variance analogue of the equi-confounding assumption behind negative outcome controls \citep{sofer2016negative}.

\subsection{An exchangeable Gaussian example}
The example concerns a margin-scale oracle, without the accuracy threshold; it is a benchmark under its assumptions and does not decide what an observed accuracy ratio means.
Take exchangeable actions with $\mu=0$ and independent normal $g$, $h$, $e$ with variances $\sigma_g^2,\sigma_h^2,\sigma_e^2$, and let $m_K=\mathbb E\max(0,Z_1,\dots,Z_K)$ for standard normal $Z$.
Selecting on one prompt and scoring on the other gives $\Grepro(\Real)=m_K(\sigma_g^2+\sigma_h^2)/\tau_R$ and $\Grepro(N)=m_K\sigma_g^2/\tau_N$, with $\tau_R^2=\sigma_g^2+\sigma_h^2+\sigma_e^2$ and $\tau_N^2=\sigma_g^2+\sigma_e^2$, so
\begin{equation}
\rho=\phi\,\tau_R/\tau_N\in[\phi,\sqrt\phi],\qquad \phi=\sigma_g^2/(\sigma_g^2+\sigma_h^2).
\end{equation}
Absolute headroom grows with the menu like an order statistic ($m_K=0.40,0.89,1.42,2.07$ for $K=1,3,8,32$), but $\rho$ does not depend on $K$.
Two consequences matter for reading data: $\rho\ge\phi$, so even a matched placebo's $\rho$ overstates the non-specific share; and $\rho\le\sqrt\phi\le1$, so in this model a $\rho$ above one is incompatible with a matched placebo.
A Monte Carlo check with $10^6$ items per cell ($\phi\in\{1,0.75,0.5,0.25\}$, reliability $\in\{1,0.85,0.5\}$, $K\in\{1,3,32\}$) reproduced the closed forms to within Monte Carlo error, and every simulated $\rho$ lay in $[\phi,\sqrt\phi]$.

\subsection{The matching condition in the observed data}
\label{app:matching-check}
The joint condition fails in the three-program study: every placebo's mean effect differs from the real action's, and four of the nine noise-variance ratios exclude one (\cref{tab:jointcheck}).
Under the condition, \cref{prop:matched} implies that no placebo's cross-prompt covariance of per-item effects exceeds the real action's; the placebo-to-real covariance ratios for the skip and the two repeats are \val{S.RDS.covratio} (\RDi), \val{S.SHS.covratio} (\SH) and \val{S.ROT.covratio} (\Rot), so five of the six repeat ratios exceed one in point estimate, by up to 0.12. A subsequent paired-bootstrap max-$T$ test of the joint hypothesis that every placebo-minus-real covariance difference is non-positive does not reject it in this study ($p=\val{S.cpu.cov.p}$); this does not establish matching. A ratio above one contradicts the matching condition together with the independence of $g$ and $h$, not the matching condition alone.
In the large-menu study, \RDf{} has a lower noise variance than the real programs (median ratio over programs \val{qwen.diag.noise}~\val{qwen.diag.noise.ci} on Qwen3-4B-Base and \val{llama.diag.noise}~\val{llama.diag.noise.ci} on Llama-3.1-8B), and a larger cross-prompt covariance for \val{qwen.diag.covpos} and \val{llama.diag.covpos} of the 32 programs.
For each model, the analysis tests $H_0:D_k\leq0$ for every program, where $D_k$ is placebo minus real cross-prompt covariance.
The statistic is $\max_k \widehat D_k/\widehat{\mathrm{se}}_k$, using \val{cpu.cov.B} paired item-bootstrap replicates stratified by task; the reference distribution is the maximum of the centered bootstrap differences divided by the same standard errors.
The covariance pools items across tasks, including task-mean differences.
The observed statistics are \val{qwen.cpu.cov.T} and \val{llama.cpu.cov.T}; both $p$-values are approximately \val{cpu.cov.mc.resolution}, the minimum attainable under the plus-one Monte Carlo rule.
Holm correction across the \val{cpu.cov.holm.n} joint tests (the two large-menu models and the three-program study) rejects both large-menu nulls and not the three-program null.
This check excludes the single BBH item with only one answer option, for which a correct-versus-incorrect margin is undefined. Per-program covariances, ratios and intervals \ifanonymous are included in the supplementary\else will be released with the other\fi{} numerical results.
Placebo-to-real ratios are therefore comparisons with named placebos, not shares.

\begin{table}[htbp]
\centering
\begin{minipage}{396pt}\approvedtablefont
\caption{\textbf{Mean effects and noise variances.}}
\label{tab:jointcheck}
\begingroup\fontsize{8.5}{10}\selectfont
\setlength{\tabcolsep}{3pt}\renewcommand{\arraystretch}{1.04}
\begin{tabular*}{\linewidth}{@{\extracolsep{\fill}}llcc@{}}
\toprule
\textbf{Action} & \textbf{Placebo} & \shortstack{\textbf{Mean difference (nats)}\\placebo $-$ real; ref. 0} & \shortstack{\textbf{Noise variance ratio}\\placebo / real; ref. 1} \\
\midrule
\textbf{skip 16--19} & \RDi & \ci{+0.19}{+0.14}{+0.24} & \ci{0.80}{0.72}{0.91} \\
 & \SH & \ci{+0.16}{+0.11}{+0.21} & \ci{1.08}{0.91}{1.26} \\
 & \Rot & \ci{+0.11}{+0.06}{+0.16} & \ci{0.96}{0.85}{1.08} \\
\addlinespace[3pt]
\textbf{repeat 16--19} & \RDi & \ci{-0.11}{-0.16}{-0.06} & \ci{0.92}{0.83}{1.02} \\
 & \SH & \ci{-0.05}{-0.09}{-0.004} & \ci{0.71}{0.65}{0.78} \\
 & \Rot & \ci{-0.08}{-0.13}{-0.04} & \ci{0.85}{0.76}{0.94} \\
\addlinespace[3pt]
\textbf{repeat 28--31} & \RDi & \ci{-0.07}{-0.10}{-0.04} & \ci{1.11}{1.02}{1.22} \\
 & \SH & \ci{-0.05}{-0.08}{-0.03} & \ci{0.95}{0.88}{1.02} \\
 & \Rot & \ci{-0.09}{-0.12}{-0.06} & \ci{1.12}{0.98}{1.29} \\
\bottomrule
\end{tabular*}\endgroup

\par\vspace{6pt}
{\fontsize{8.5}{10}\selectfont\raggedright
Pooled three-program study; estimate [95\% item-bootstrap CI]. Mean differences use margin effects in nats; noise variance is $\operatorname{Var}(x^1-x^2)/2$. The real mean effects are $-0.90$, $-0.06$, and $-0.06$ nats for skip 16--19, repeat 16--19, and repeat 28--31, respectively. Reference values 0 and 1 denote equality with the real action.\par}
\end{minipage}
\end{table}

\FloatBarrier
\section{Placebo construction, calibration and implementation}
\label{app:placebos}

\subsection{Large-menu study}

\paragraph{Programs.}
Each model's menu is a $2\times4\times4$ grid: skip or repeat once; segment length 1--4; one segment centred in each of four equal-width depth bands.
A program named \texttt{skip2@19} skips the two layers starting at layer 19 (0-indexed); \texttt{repeat3@10} runs layers 10--12 twice.

\paragraph{\RDf.}
All randomness comes from 64-bit seeds taken from SHA-256 hashes of fixed strings.
Within each draw, for every model and every layer $\ell$ modified by some program, a $J\times D$ standard Gaussian matrix $B_\ell$ ($J=2{,}048$) is generated once; row $j$ is used at the position $j$ tokens before the end of the sequence, for every item and prompt.
A program $p$ with sign $s_p=-1$ (skip) or $+1$ (repeat) and modified layers $S_p$ has direction $u_{p,t}=s_p\sum_{\ell\in S_p}b_{\ell,j(t)}$, normalized per position.
The injected update is $c_p\,r_p(t)\,\hat u_{p,t}$ (the scale $c_k$ of \cref{sec:method}, indexed here by program $p$), where $r_p(t)$ is the median norm of the real update at the same distance from the end on 1,000 calibration items (the first position, an attention sink, has its own median; distances observed on fewer than 50 item-prompt pairs use the median over all positions).

\paragraph{Calibration.}
Scales are $c_p=2^{q_p/4}$.
For each program, the placebo's top-1 answer-change rate on the 1,000 calibration items was measured on a coarse grid $q\in\{-16,-12,\dots,24\}$ and refined twice around the best point; $c_p$ minimizes the absolute difference between the placebo's and the real program's number of changed answers (ties to the smaller scale; no interpolation).
The tolerance is $\max(0.25\cdot\text{churn}_{\Real},\tau)$, with $\tau=\max(1\text{ point},3f)$.
The numerical floor $f$ is the larger of two answer-change rates: between the Hugging Face forward pass and the explicit layer-loop implementation, and between two independent executions.
On both models $f=\val{qwen.cal.f}$, so $\tau=1$ point.
On the calibration items all 32 programs were within tolerance on both models (largest absolute differences \val{qwen.cal.maxdiff} and \val{llama.cal.maxdiff} points; medians \val{qwen.cal.meddiff} and \val{llama.cal.meddiff}), no scale lay at a grid end ($c_p$ from \val{qwen.cal.c.min} to \val{qwen.cal.c.max} on Qwen3-4B-Base and from \val{llama.cal.c.min} to \val{llama.cal.c.max} on Llama-3.1-8B), and every program's answer-change rate exceeded $5\tau$, so all 32 formed the active set of the matching rule.
KL divergence was not matched; the placebo-to-real ratio of median KL on the evaluation items ranges from \val{qwen.kl.min} to \val{qwen.kl.max} (Qwen3-4B-Base) and from \val{llama.kl.min} to \val{llama.kl.max} (Llama-3.1-8B).

\paragraph{Matching rule on the evaluation items.}
A family passes if at least 30 of 32 programs are within tolerance, all 32 have a churn ratio in $[0.5,2]$ or an absolute difference of at most $2\tau$, and the median churn ratio over the active set lies in $[0.9,1.1]$.
It uses answer changes only, not labels.
Both models' RD families passed: 32/32 programs within tolerance, ratios \val{qwen.val.ratio.min}--\val{qwen.val.ratio.max} and \val{llama.val.ratio.min}--\val{llama.val.ratio.max}, medians \val{qwen.val.median} and \val{llama.val.median}.
\subsection{Three-program study}
Calibration used 300 items (600 item-prompt pairs) disjoint from the evaluation items, and read no labels.
For each placebo and action, the median KL divergence $D_{\mathrm{KL}}(p_0\,\|\,p)$ over the vocabulary at the answer position was measured at eight scales $c\in\{0.5,0.75,1,1.5,2,3,4,6\}$, and $c_k$ was interpolated log-linearly to the real action's median KL.
All nine scales were interior (\cref{tab:laddercal}).
The \RDi{} norm profile $r_k(t)$ was fitted on the calibration items' real updates as a function of the distance from the end.
\SH{} maps receiver position $t\ge1$ to donor rows aligned from the end, so both prompts of an item receive identical donor rows on their shared suffix; no position receives the receiver's own update, and donor few-shot examples are disjoint from the receiver's.
\Rot{} uses one $2560\times2560$ Haar-random orthogonal matrix; it preserves per-token norms and the inner products between the three actions' updates.

\begin{table}[htbp]
\centering
\begin{minipage}{396pt}\approvedtablefont
\caption{\textbf{Calibration and matching check.}}
\label{tab:laddercal}
\begingroup\fontsize{8.5}{10}\selectfont
\setlength{\tabcolsep}{3pt}\renewcommand{\arraystretch}{1.04}
\begin{tabular*}{\linewidth}{@{\extracolsep{\fill}}llcccc@{}}
\toprule
\multirow{2}{*}{\textbf{Action}} & \multirow{2}{*}{\textbf{Placebo}} & \multirow{2}{*}{\textbf{Target KL}} & \multirow{2}{*}{\shortstack{\textbf{Scale}\\$c_k$}} & \multicolumn{2}{c}{\textbf{KL ratio (placebo / real)}} \\
\cmidrule(lr){5-6}
 &  &  &  & \textbf{Calibration} & \textbf{Evaluation} \\
\midrule
\textbf{skip 16--19} & \RDi & \num{0.496} & \num{1.607} & \num{1.011} & \num{1.007} \\
 & \SH & \num{0.496} & \num{1.014} & \num{1.026} & \num{1.015} \\
 & \Rot & \num{0.496} & \num{1.551} & \num{1.005} & \num{0.981} \\
\addlinespace[3pt]
\textbf{repeat 16--19} & \RDi & \num{0.0753} & \num{1.137} & \num{0.997} & \num{1.096} \\
 & \SH & \num{0.0753} & \num{0.980} & \num{0.997} & \num{1.011} \\
 & \Rot & \num{0.0753} & \num{0.933} & \num{0.995} & \num{1.024} \\
\addlinespace[3pt]
\textbf{repeat 28--31} & \RDi & \num{0.0347} & \num{1.213} & \num{0.985} & \num{1.064} \\
 & \SH & \num{0.0347} & \num{0.964} & \num{1.007} & \num{0.901} \\
 & \Rot & \num{0.0347} & \num{1.013} & \num{1.010} & \num{1.043} \\
\bottomrule
\end{tabular*}\endgroup

\par\vspace{6pt}
{\fontsize{8.5}{10}\selectfont\raggedright
Three-program study. Target KL is the real action's median KL on the calibration items; $c_k$ is the fitted placebo scale. Ratios compare placebo-to-real median KL on calibration and evaluation items, with equality at 1.\par}
\end{minipage}
\end{table}

A family passed if all three ratios lay in $[0.8,1.25]$ on the evaluation items; all three were.
Matching the median KL left the placebos' margin damage close to the real actions' (0.320--0.359 against 0.339 nats) but not their accuracy damage (7.0--7.8 against 9.4 points), and their effects are slightly less dispersed (standard-deviation ratios 0.88--0.91).

\subsection{Implementation}
\label{app:verification}
All actions share one unmodified pass cached at segment boundaries, and every program reproduces a plain forward pass over its executed layer sequence exactly.
The head is applied in fp32 to the bf16 final hidden state, because bf16 logits put letter margins on a grid of $1/16$ nat or coarser and create exact ties.
Every candidate is computed at batch size 1, because batch composition alone changes the top-1 letter on 2--4\% of item-prompt pairs.

\FloatBarrier
\section{Large-menu study: data and further results}
\label{app:large}

\paragraph{Analysis specification.}
\label{app:analysis-timing}
Before the development outcomes were observed, the large-menu study specified the half-headroom hypothesis and a stronger $\pm20\%$ equivalence hypothesis, along with the menu, tie averaging, development-selected static actions and churn calibration.
The half-headroom threshold was a practical criterion for a substantial control effect, not a theoretical boundary for computation-specific need.
The prospective power rule stopped the initial study before evaluation because the available pool could not adequately power equivalence.
After the development estimates were inspected, a separate confirmation of the existing half-headroom hypothesis was specified on the untouched evaluation pool; its threshold and the action, calibration and scoring rules were unchanged.
The unmodified-pass reference, original letter-shift diagnostic and menu-size curves were also specified as descriptive checks.
Later additions include option rotation, keeping $a_0$ on accuracy ties, item-decoupled and mixture comparisons, development-fitted letter-offset menus, and the paired intervals and tie-type decomposition reported here.
The two fresh RD draws were specified after the original findings and before their new runs; the transfer confirmation followed its own exploratory pilot and was specified before evaluating its fresh problems.
\Cref{tab:analysis-status} summarizes these distinctions. \ifanonymous The supplement contains anonymized\else The release will include\fi{} English versions of the original internally versioned plans and amendments, indexed in \texttt{analysis\_records/INDEX.txt}; they were not registered publicly, and their recorded dates are not independently certified timestamps.

\begin{table}[htbp]
\centering
\begin{minipage}{396pt}\approvedtablefont
\caption{\textbf{Role and timing of the analyses.}}
\label{tab:analysis-status}
\begingroup\fontsize{9}{11}\selectfont
\setlength{\tabcolsep}{4pt}\renewcommand{\arraystretch}{1.12}
\begin{tabular*}{\linewidth}{@{\extracolsep{\fill}}>{\raggedright\arraybackslash}p{128pt}>{\raggedright\arraybackslash}p{255pt}@{}}
\toprule
Analysis & Role and specification relative to outcomes \\
\midrule
Initial half-headroom and equivalence hypotheses & Specified before development outcomes; the original joint power requirement stopped evaluation. No equivalence claim is made. \\
Half-headroom confirmation & Confirmation specified after development estimates and before untouched evaluation; the threshold, menu and scoring rules were retained. \\
Original descriptive checks & Unmodified-pass reference, original letter-shift diagnostic, menu-size curves and cluster sensitivity were planned. \\
Rotation and other diagnostic extensions & Exploratory relative to the original study: rotated options, alternative ties, fitted letter offsets, decoupled and mixture references, covariance tests, and later paired decompositions. \\
Two fresh RD draws & Robustness contrasts specified after original findings and before the new draws; evaluation items were reused. \\
Three-draw means and expanded submenus & Post hoc summaries of stored outputs, with conditional pointwise intervals and no new confirmation rule. \\
Fresh-item transfer confirmation & Primary own--donor threshold and secondary paired style contrast specified after the pilot and before fresh-item runs; same-length and accuracy summaries were planned supplements. \\
Transfer paired decomposition & Own minus unmodified, unmodified minus donors, and the retention-rate interval were added after confirmation. \\
\bottomrule
\end{tabular*}\endgroup
\end{minipage}
\end{table}

\paragraph{Items and prompts.}
Items come from MMLU-Pro (test split), the BBH tasks with letter answers and at most ten options, and ARC-Challenge, excluding every item and few-shot example used elsewhere in this work, items with more than ten options, and items whose prompt exceeds 2,048 tokens under either tokenizer.
Items were split by stratified seeded sampling into 1,000 calibration items (label-free) and 1,000 development items in total (333 or 334 per task) and, within each task, an evaluation order; ARC-Challenge calibration and development items come from its training split and its evaluation items from the validation and test splits.
The evaluation set is the first \val{L.nd} items of each task's order, the size of the smallest pool, and the same items are used for both models.
Each item has four distinct few-shot examples drawn from the same MMLU-Pro category or BBH task (ARC: 100 reserved training items), two per prompt; each answer letter is scored as a single token with a leading space. The last examples of the two prompts share their answer letter for \val{L.v12.fewshot.last}\% of items, and \val{L.v12.fewshot.contains}\% of prompts contain an example whose answer letter is the item's correct letter.
One evaluation item (BBH \texttt{snarks}) has a single option because of a parsing error; every action receives accuracy 1 under the parsed representation.
It remains in the accuracy denominator with zero per-item headroom, and is excluded from analyses requiring a correct-versus-incorrect margin.

\paragraph{Cross-prompt scoring.}
The prompts differ only in their two few-shot examples; the question, the options and their order are identical in the fixed-order design.
For each item, the oracle averages over the actions that answer the selection prompt correctly (over all actions if none does), scores them on the other prompt, and averages the two directions.
Cross-prompt headroom subtracts the accuracy of the development-selected static action under the same scoring rule.
\Cref{fig:example} shows the answer-letter probabilities of the example in the overview.

\begin{figure}[!ht]
\centering
\begin{minipage}{396pt}
\includegraphics[width=\linewidth]{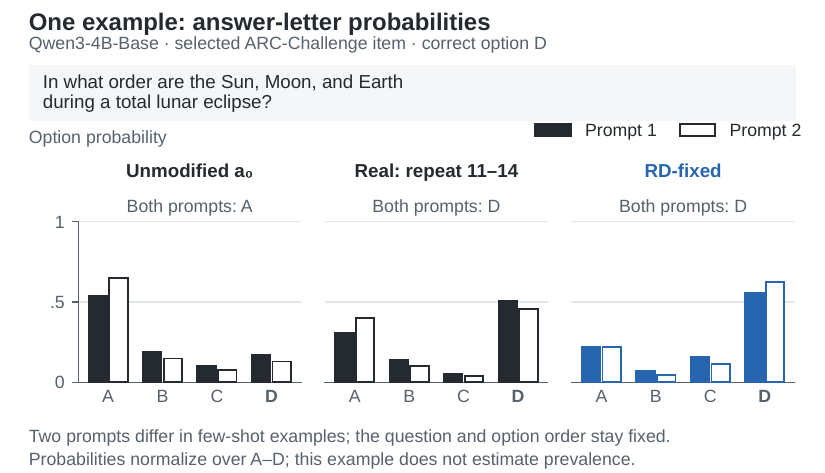}
\caption[One example: answer-letter probabilities]{Answer-letter probabilities for the lunar-eclipse example (Qwen3-4B-Base, ARC-Challenge; original RD draw). Options are A: Earth--Moon--Sun, B: Earth--Sun--Moon, C: Moon--Sun--Earth, D: Moon--Earth--Sun (correct). Filled and open bars use different few-shot examples with identical question/options. The unmodified model predicts A on both; repeating layers 11--14 and its paired RD-fixed control predict D on both. Probabilities normalize over the four letters on a shared 0--1 scale. This selected example illustrates output changes, not population frequencies.}
\label{fig:example}
\end{minipage}
\end{figure}

\paragraph{Static actions.}
The static action of each model, task and family is the action with the highest mean accuracy over both prompts on the development items (ties to $a_0$, then to the lowest program index).
\paragraph{Per task and alternative baselines.}
\Cref{tab:largetasks} gives the unmodified model's accuracy and the cross-prompt headroom per task, with the chosen static actions and with $a_0$ as the static action (point estimates).
Resampling the 14 MMLU-Pro categories and 15 BBH tasks as clusters (ARC's \val{L.nd} items singly) gives $\Grepro(\Real)$ \val{qwen.Greal.cci} and \val{llama.Greal.cci}, $\Grepro(\RDm)$ \val{qwen.Grd.cci} and \val{llama.Grd.cci}, and $C$ \val{qwen.C.cci} and \val{llama.C.cci} on Qwen3-4B-Base and Llama-3.1-8B.
The difference $\Delta=\Grepro(\Real)-\Grepro(\RDm)$ is \val{qwen.D}~\val{qwen.D.ci} and \val{llama.D}~\val{llama.D.ci} (item bootstrap).

\begin{table}[htbp]
\centering
\begin{minipage}{396pt}\approvedtablefont
\caption{\textbf{Large-menu study: results by task.}}
\label{tab:largetasks}
\begingroup\fontsize{8.5}{10}\selectfont
\setlength{\tabcolsep}{4pt}\renewcommand{\arraystretch}{1.04}
\begin{tabular*}{\linewidth}{@{\extracolsep{\fill}}lccccc@{}}
\toprule
 &  & \multicolumn{2}{c}{$\Grepro$ (pp): chosen static} & \multicolumn{2}{c}{$\Grepro$ (pp): static $a_0$} \\
\cmidrule(lr){3-4}\cmidrule(lr){5-6}
Task & $\mathrm{Acc}(a_0)$ (\%) & Real & \RDf & Real & \RDf \\
\midrule
\addlinespace[2pt]
\multicolumn{6}{@{}l}{\textbf{Qwen3-4B-Base}} \\[1pt]
MMLU-Pro & \num{45.2} & \num{12.7} & \numb{16.3} & \num{12.5} & \numb{16.3} \\
BBH & \num{51.5} & \num{11.0} & \numb{15.4} & \num{10.7} & \numb{15.4} \\
ARC-Challenge & \num{89.9} & \num{3.3} & \numb{3.7} & \num{2.8} & \numb{3.1} \\
\addlinespace[2pt]
\multicolumn{6}{@{}l}{\textbf{Llama-3.1-8B}} \\[1pt]
MMLU-Pro & \num{34.2} & \num{12.6} & \numb{17.9} & \num{11.7} & \numb{18.2} \\
BBH & \num{40.7} & \num{11.2} & \numb{19.2} & \num{9.9} & \numb{19.5} \\
ARC-Challenge & \num{80.3} & \num{6.5} & \numb{9.6} & \num{5.5} & \numb{8.8} \\
\bottomrule
\end{tabular*}\endgroup

\par\vspace{6pt}
{\fontsize{8.5}{10}\selectfont\raggedright
Fixed option order; original RD draw. $\mathrm{Acc}(a_0)$ is unmodified-model accuracy averaged over prompts (\%). $\Grepro$ is cross-prompt oracle accuracy minus static-reference accuracy (pp). Chosen static maximizes development accuracy for each model, task and family; the alternative reference is $a_0$. All entries are point estimates; bold marks the larger displayed Real/\RDf{} estimate in each pair.\par}
\end{minipage}
\end{table}

\paragraph{Single programs.}
With a single alternative per oracle, \RDf{} has higher cross-prompt headroom than the real program for \val{qwen.prog.skip.rdwins} and \val{llama.prog.skip.rdwins} of the 16 skips on Qwen3-4B-Base and Llama-3.1-8B and lower for \val{qwen.prog.rep.realwins} and \val{llama.prog.rep.realwins} of the 16 repeats (point estimates); its largest leads occur for early skips of three or four layers that change more than half of the answers.
The oracle over all 32 programs is a maximum, not a sum, so these values do not decompose the menu-level difference.

\paragraph{Sub-menus.}
\Cref{tab:submenus,tab:submenus-rotated} restrict the menu to the 16 repeats, the 16 skips, or the programs whose real answer-change rate on the calibration items is at most 30\%, with static actions re-chosen within each sub-menu on the development items and kept for the rotated design.
These values are descriptive; the oracle over the full menu is a maximum, so they do not decompose its headroom.
Under rotation, 16 of the 18 pointwise intervals for $\Delta$ exclude zero on the positive side. The exceptions are Qwen\textquotesingle s low-churn menu in draws 2 and 3: \storedci{qwen.v16.sub.churn_le_30.d2.D_perm} and \storedci{qwen.v16.sub.churn_le_30.d3.D_perm} points, respectively.

\begin{table}[htbp]
\centering
\begin{minipage}{396pt}\approvedtablefont
\caption{\textbf{Submenu headroom across draws: shared option order.}}
\label{tab:submenus}
\begingroup\fontsize{8.5}{10}\selectfont
\setlength{\tabcolsep}{2pt}\renewcommand{\arraystretch}{1.10}
\begin{tabular*}{\linewidth}{@{\extracolsep{\fill}}llcccc@{}}
\toprule
Menu ($K$) & Draw & Real & \RDf & $\Delta$ & $C$ \\
\midrule
\multicolumn{6}{@{}l}{\textbf{Qwen3-4B-Base}} \\[1pt]
Repeats (16) & 1 & \storedcib{qwen.v16.sub.repeats.d1.G_real_fixed} & \storedci{qwen.v16.sub.repeats.d1.G_rd_fixed} & \storedci{qwen.v16.sub.repeats.d1.D_fixed} & \storedci{qwen.v16.sub.repeats.d1.C} \\
 & 2 & \storedci{qwen.v16.sub.repeats.d2.G_real_fixed} & \storedcib{qwen.v16.sub.repeats.d2.G_rd_fixed} & \storedci{qwen.v16.sub.repeats.d2.D_fixed} & \storedci{qwen.v16.sub.repeats.d2.C} \\
 & 3 & \storedci{qwen.v16.sub.repeats.d3.G_real_fixed} & \storedcib{qwen.v16.sub.repeats.d3.G_rd_fixed} & \storedci{qwen.v16.sub.repeats.d3.D_fixed} & \storedci{qwen.v16.sub.repeats.d3.C} \\
\addlinespace[3pt]
Skips (16) & 1 & \storedci{qwen.v16.sub.skips.d1.G_real_fixed} & \storedcib{qwen.v16.sub.skips.d1.G_rd_fixed} & \storedci{qwen.v16.sub.skips.d1.D_fixed} & \storedci{qwen.v16.sub.skips.d1.C} \\
 & 2 & \storedci{qwen.v16.sub.skips.d2.G_real_fixed} & \storedcib{qwen.v16.sub.skips.d2.G_rd_fixed} & \storedci{qwen.v16.sub.skips.d2.D_fixed} & \storedci{qwen.v16.sub.skips.d2.C} \\
 & 3 & \storedci{qwen.v16.sub.skips.d3.G_real_fixed} & \storedcib{qwen.v16.sub.skips.d3.G_rd_fixed} & \storedci{qwen.v16.sub.skips.d3.D_fixed} & \storedci{qwen.v16.sub.skips.d3.C} \\
\addlinespace[3pt]
Churn $\leq30\%$ (21) & 1 & \storedcib{qwen.v16.sub.churn_le_30.d1.G_real_fixed} & \storedci{qwen.v16.sub.churn_le_30.d1.G_rd_fixed} & \storedci{qwen.v16.sub.churn_le_30.d1.D_fixed} & \storedci{qwen.v16.sub.churn_le_30.d1.C} \\
 & 2 & \storedci{qwen.v16.sub.churn_le_30.d2.G_real_fixed} & \storedci{qwen.v16.sub.churn_le_30.d2.G_rd_fixed} & \storedci{qwen.v16.sub.churn_le_30.d2.D_fixed} & \storedci{qwen.v16.sub.churn_le_30.d2.C} \\
 & 3 & \storedci{qwen.v16.sub.churn_le_30.d3.G_real_fixed} & \storedcib{qwen.v16.sub.churn_le_30.d3.G_rd_fixed} & \storedci{qwen.v16.sub.churn_le_30.d3.D_fixed} & \storedci{qwen.v16.sub.churn_le_30.d3.C} \\
\addlinespace[3pt]
\multicolumn{6}{@{}l}{\textbf{Llama-3.1-8B}} \\[1pt]
Repeats (16) & 1 & \storedcib{llama.v16.sub.repeats.d1.G_real_fixed} & \storedci{llama.v16.sub.repeats.d1.G_rd_fixed} & \storedci{llama.v16.sub.repeats.d1.D_fixed} & \storedci{llama.v16.sub.repeats.d1.C} \\
 & 2 & \storedcib{llama.v16.sub.repeats.d2.G_real_fixed} & \storedci{llama.v16.sub.repeats.d2.G_rd_fixed} & \storedci{llama.v16.sub.repeats.d2.D_fixed} & \storedci{llama.v16.sub.repeats.d2.C} \\
 & 3 & \storedcib{llama.v16.sub.repeats.d3.G_real_fixed} & \storedci{llama.v16.sub.repeats.d3.G_rd_fixed} & \storedci{llama.v16.sub.repeats.d3.D_fixed} & \storedci{llama.v16.sub.repeats.d3.C} \\
\addlinespace[3pt]
Skips (16) & 1 & \storedci{llama.v16.sub.skips.d1.G_real_fixed} & \storedcib{llama.v16.sub.skips.d1.G_rd_fixed} & \storedci{llama.v16.sub.skips.d1.D_fixed} & \storedci{llama.v16.sub.skips.d1.C} \\
 & 2 & \storedci{llama.v16.sub.skips.d2.G_real_fixed} & \storedcib{llama.v16.sub.skips.d2.G_rd_fixed} & \storedci{llama.v16.sub.skips.d2.D_fixed} & \storedci{llama.v16.sub.skips.d2.C} \\
 & 3 & \storedci{llama.v16.sub.skips.d3.G_real_fixed} & \storedcib{llama.v16.sub.skips.d3.G_rd_fixed} & \storedci{llama.v16.sub.skips.d3.D_fixed} & \storedci{llama.v16.sub.skips.d3.C} \\
\addlinespace[3pt]
Churn $\leq30\%$ (21) & 1 & \storedcib{llama.v16.sub.churn_le_30.d1.G_real_fixed} & \storedci{llama.v16.sub.churn_le_30.d1.G_rd_fixed} & \storedci{llama.v16.sub.churn_le_30.d1.D_fixed} & \storedci{llama.v16.sub.churn_le_30.d1.C} \\
 & 2 & \storedci{llama.v16.sub.churn_le_30.d2.G_real_fixed} & \storedcib{llama.v16.sub.churn_le_30.d2.G_rd_fixed} & \storedci{llama.v16.sub.churn_le_30.d2.D_fixed} & \storedci{llama.v16.sub.churn_le_30.d2.C} \\
 & 3 & \storedci{llama.v16.sub.churn_le_30.d3.G_real_fixed} & \storedcib{llama.v16.sub.churn_le_30.d3.G_rd_fixed} & \storedci{llama.v16.sub.churn_le_30.d3.D_fixed} & \storedci{llama.v16.sub.churn_le_30.d3.C} \\
\addlinespace[3pt]
\bottomrule
\end{tabular*}\endgroup
\par\vspace{6pt}
{\fontsize{8.5}{10}\selectfont\raggedright
Headroom and contrasts are in percentage points; $\Delta=\Grepro(\Real)-\Grepro(\RDm)$ and $C=\Grepro(\RDm)-\tfrac12\Grepro(\Real)$. Each menu includes $a_0$ in addition to $K$ edited programs. The low-churn menu uses the same original real-calibration program IDs in all draws. Static actions are selected on development items within each menu and family for each draw, and retained under rotation. Brackets are pointwise 95\% paired item-bootstrap intervals, stratified by task and conditional on each fitted control. These post hoc comparisons are not simultaneous tests. Bold marks the larger displayed family point estimate; rounded ties are unmarked. Overlapping menus do not decompose full-menu headroom.\par}
\end{minipage}
\end{table}

\begin{table}[htbp]
\centering
\begin{minipage}{396pt}\approvedtablefont
\caption{\textbf{Submenu headroom across draws: rotated option order.}}
\label{tab:submenus-rotated}
\begingroup\fontsize{8.5}{10}\selectfont
\setlength{\tabcolsep}{2pt}\renewcommand{\arraystretch}{1.10}
\begin{tabular*}{\linewidth}{@{\extracolsep{\fill}}llccc@{}}
\toprule
Menu ($K$) & Draw & Real & \RDf & $\Delta$ \\
\midrule
\multicolumn{5}{@{}l}{\textbf{Qwen3-4B-Base}} \\[1pt]
Repeats (16) & 1 & \storedcib{qwen.v16.sub.repeats.d1.G_real_perm} & \storedci{qwen.v16.sub.repeats.d1.G_rd_perm} & \storedci{qwen.v16.sub.repeats.d1.D_perm} \\
 & 2 & \storedcib{qwen.v16.sub.repeats.d2.G_real_perm} & \storedci{qwen.v16.sub.repeats.d2.G_rd_perm} & \storedci{qwen.v16.sub.repeats.d2.D_perm} \\
 & 3 & \storedcib{qwen.v16.sub.repeats.d3.G_real_perm} & \storedci{qwen.v16.sub.repeats.d3.G_rd_perm} & \storedci{qwen.v16.sub.repeats.d3.D_perm} \\
\addlinespace[3pt]
Skips (16) & 1 & \storedcib{qwen.v16.sub.skips.d1.G_real_perm} & \storedci{qwen.v16.sub.skips.d1.G_rd_perm} & \storedci{qwen.v16.sub.skips.d1.D_perm} \\
 & 2 & \storedcib{qwen.v16.sub.skips.d2.G_real_perm} & \storedci{qwen.v16.sub.skips.d2.G_rd_perm} & \storedci{qwen.v16.sub.skips.d2.D_perm} \\
 & 3 & \storedcib{qwen.v16.sub.skips.d3.G_real_perm} & \storedci{qwen.v16.sub.skips.d3.G_rd_perm} & \storedci{qwen.v16.sub.skips.d3.D_perm} \\
\addlinespace[3pt]
Churn $\leq30\%$ (21) & 1 & \storedcib{qwen.v16.sub.churn_le_30.d1.G_real_perm} & \storedci{qwen.v16.sub.churn_le_30.d1.G_rd_perm} & \storedci{qwen.v16.sub.churn_le_30.d1.D_perm} \\
 & 2 & \storedcib{qwen.v16.sub.churn_le_30.d2.G_real_perm} & \storedci{qwen.v16.sub.churn_le_30.d2.G_rd_perm} & \storedci{qwen.v16.sub.churn_le_30.d2.D_perm} \\
 & 3 & \storedci{qwen.v16.sub.churn_le_30.d3.G_real_perm} & \storedci{qwen.v16.sub.churn_le_30.d3.G_rd_perm} & \storedci{qwen.v16.sub.churn_le_30.d3.D_perm} \\
\addlinespace[3pt]
\multicolumn{5}{@{}l}{\textbf{Llama-3.1-8B}} \\[1pt]
Repeats (16) & 1 & \storedcib{llama.v16.sub.repeats.d1.G_real_perm} & \storedci{llama.v16.sub.repeats.d1.G_rd_perm} & \storedci{llama.v16.sub.repeats.d1.D_perm} \\
 & 2 & \storedcib{llama.v16.sub.repeats.d2.G_real_perm} & \storedci{llama.v16.sub.repeats.d2.G_rd_perm} & \storedci{llama.v16.sub.repeats.d2.D_perm} \\
 & 3 & \storedcib{llama.v16.sub.repeats.d3.G_real_perm} & \storedci{llama.v16.sub.repeats.d3.G_rd_perm} & \storedci{llama.v16.sub.repeats.d3.D_perm} \\
\addlinespace[3pt]
Skips (16) & 1 & \storedcib{llama.v16.sub.skips.d1.G_real_perm} & \storedci{llama.v16.sub.skips.d1.G_rd_perm} & \storedci{llama.v16.sub.skips.d1.D_perm} \\
 & 2 & \storedcib{llama.v16.sub.skips.d2.G_real_perm} & \storedci{llama.v16.sub.skips.d2.G_rd_perm} & \storedci{llama.v16.sub.skips.d2.D_perm} \\
 & 3 & \storedcib{llama.v16.sub.skips.d3.G_real_perm} & \storedci{llama.v16.sub.skips.d3.G_rd_perm} & \storedci{llama.v16.sub.skips.d3.D_perm} \\
\addlinespace[3pt]
Churn $\leq30\%$ (21) & 1 & \storedcib{llama.v16.sub.churn_le_30.d1.G_real_perm} & \storedci{llama.v16.sub.churn_le_30.d1.G_rd_perm} & \storedci{llama.v16.sub.churn_le_30.d1.D_perm} \\
 & 2 & \storedcib{llama.v16.sub.churn_le_30.d2.G_real_perm} & \storedci{llama.v16.sub.churn_le_30.d2.G_rd_perm} & \storedci{llama.v16.sub.churn_le_30.d2.D_perm} \\
 & 3 & \storedcib{llama.v16.sub.churn_le_30.d3.G_real_perm} & \storedci{llama.v16.sub.churn_le_30.d3.G_rd_perm} & \storedci{llama.v16.sub.churn_le_30.d3.D_perm} \\
\addlinespace[3pt]
\bottomrule
\end{tabular*}\endgroup
\par\vspace{6pt}
{\fontsize{8.5}{10}\selectfont\raggedright
Headroom and contrasts are in percentage points; $\Delta=\Grepro(\Real)-\Grepro(\RDm)$. Each menu includes $a_0$ in addition to $K$ edited programs. The low-churn menu uses the same original real-calibration program IDs in all draws. Static actions are selected on development items within each menu and family for each draw, and retained under rotation. Brackets are pointwise 95\% paired item-bootstrap intervals, stratified by task and conditional on each fitted control. These post hoc comparisons are not simultaneous tests. Bold marks the larger displayed family point estimate; rounded ties are unmarked. Overlapping menus do not decompose full-menu headroom.\par}
\end{minipage}
\end{table}

\paragraph{Menu size and letter-offset controls.}
\Cref{fig:menu-size} shows how fixed-order headroom changes with the number of edited actions and compares the full menus with their development-fitted letter-offset menus.

\begin{table}[htbp]
\centering
\begin{minipage}{396pt}\approvedtablefont
\caption{\textbf{Letter-offset menus relative to their source families.}}
\label{tab:offset-ratios}
\begingroup\fontsize{8.5}{10}\selectfont
\setlength{\tabcolsep}{4pt}\renewcommand{\arraystretch}{1.12}
\newcommand{\offsetci}[1]{\shortstack{\val{#1}\\[-1pt]{\fontsize{7.5}{9}\selectfont\val{#1.ci}}}}
\begin{tabular*}{\linewidth}{@{\extracolsep{\fill}}llccc@{}}
\toprule
Model & Source family & \shortstack{Source headroom\\(pp)} & \shortstack{Offset headroom\\(pp)} & \shortstack{Offset / source\\(\%)} \\
\midrule
Qwen3-4B-Base & Real & \offsetci{qwen.v15.offset.real.G_family} & \offsetci{qwen.v15.offset.real.G_offset_menu} & \offsetci{qwen.v15.offset.real.ratio} \\
 & \RDf & \offsetci{qwen.v15.offset.rd.G_family} & \offsetci{qwen.v15.offset.rd.G_offset_menu} & \offsetci{qwen.v15.offset.rd.ratio} \\
\addlinespace[3pt]
Llama-3.1-8B & Real & \offsetci{llama.v15.offset.real.G_family} & \offsetci{llama.v15.offset.real.G_offset_menu} & \offsetci{llama.v15.offset.real.ratio} \\
 & \RDf & \offsetci{llama.v15.offset.rd.G_family} & \offsetci{llama.v15.offset.rd.G_offset_menu} & \offsetci{llama.v15.offset.rd.ratio} \\
\bottomrule
\end{tabular*}\endgroup
\par\vspace{6pt}
{\fontsize{8.5}{10}\selectfont\raggedright
Original RD draw, shared option order, full menus and tie averaging. Each menu uses its own development-selected static action. Brackets are individual 95\% paired, task-stratified item-bootstrap intervals; each resample recomputes the ratio from the paired numerator and denominator, with fitted offsets and static actions held fixed. These post hoc intervals do not establish equivalence or a decomposition of headroom.\par}
\end{minipage}
\end{table}

\begin{figure}[!ht]
\centering
\begin{minipage}{396pt}
\includegraphics[width=\linewidth]{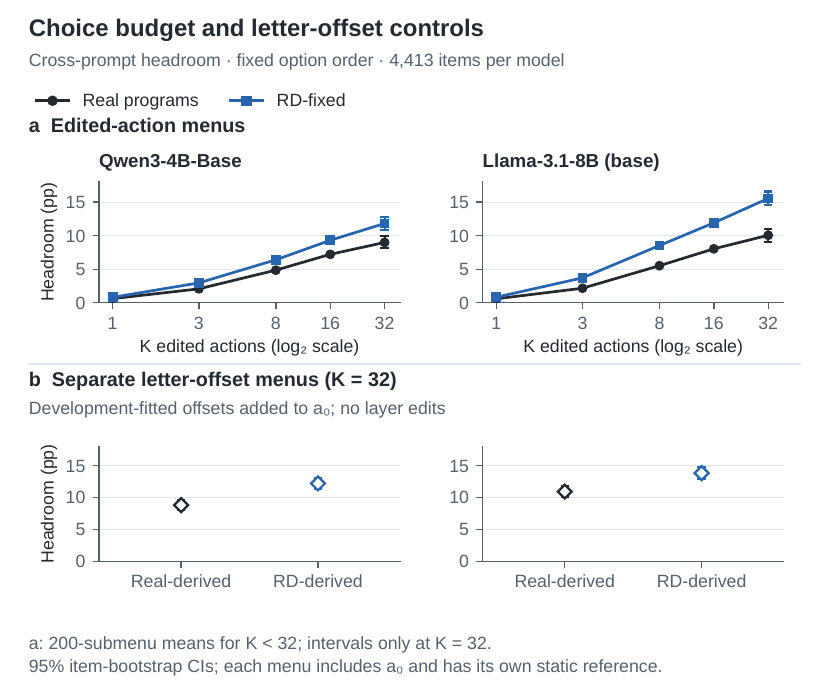}
\caption[Choice budget and letter-offset controls]{Cross-prompt headroom over development-selected static actions, with fixed option order (4,413 held-out items per model; original RD draw). (a) K counts edited actions, with a$_0$ included separately; the horizontal axis is logarithmic (base 2). For K \textless{} 32, points average 200 paired random submenus with static actions reselected within each submenu. Only the full-menu points have 95\% item-bootstrap intervals; these do not measure variation across submenus. (b) Separate 32-action menus add each family's development-fitted letter offsets to a$_0$'s scores, with their own development-selected static actions; bars are 95\% item-bootstrap intervals. Both panels share the scale, weight tasks equally and stratify item resampling by task.}
\label{fig:menu-size}
\end{minipage}
\end{figure}

\paragraph{Where the headroom arises.}
Gains are concentrated among items with a slightly negative mean unmodified margin across the two prompts and are small where the mean margin is positive (\cref{fig:margin}); a negative mean margin does not imply a wrong answer on both prompts.

\begin{figure}[!ht]
\centering
\begin{minipage}{396pt}
\includegraphics[width=\linewidth]{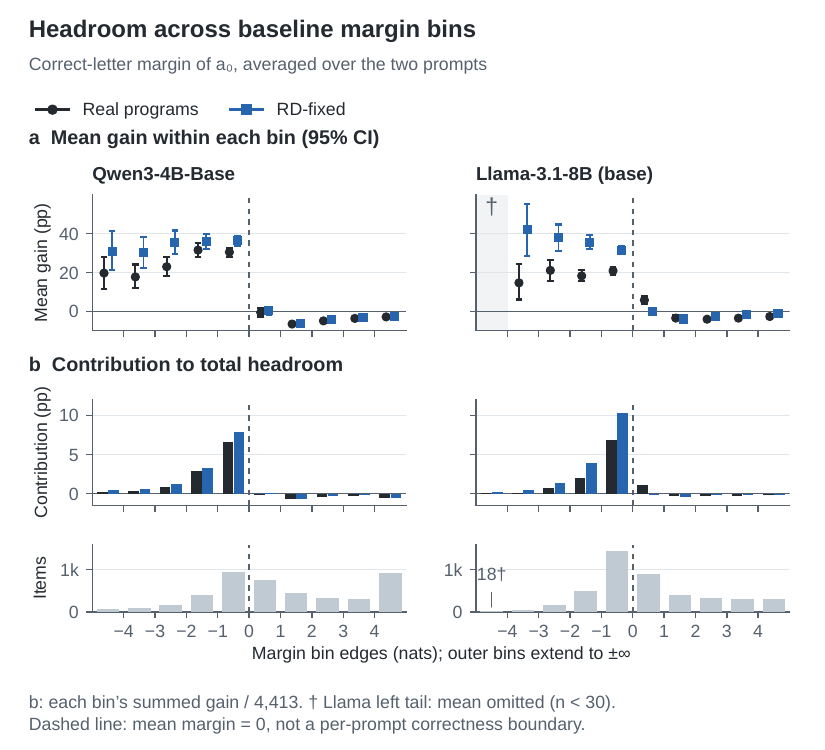}
\caption[Headroom across baseline margin bins]{Fixed-order cross-prompt gains by the unmodified correct-letter margin averaged over prompts (original RD draw). Margin is the correct letter's log probability minus the largest incorrect letter's log probability. (a) Mean oracle-minus-static accuracy gain within each bin, with pointwise 95\% within-bin item-bootstrap intervals (2,000 replicates). The shaded Llama left tail (\textdagger{}) has 18 items; its mean and interval are omitted because n \textless{} 30. (b) Each bin's summed gain divided by all 4,413 items; bars sum to the family's total headroom, including the left tail. The bottom row shows item counts on a shared scale; outer bins extend to infinity. Mean margin sign does not imply identical correctness on both prompts.}
\label{fig:margin}
\end{minipage}
\end{figure}

\FloatBarrier
\paragraph{Letter-offset menus.}
For each action $a$ of a family, task and number of options $n$, let $\bar o_a(\ell)$ be the mean over items with $n$ options and both prompts of $\log p_a(\ell)-\log p_{a_0}(\ell)$ for letter position $\ell\le n$.
The letter-offset action adds $\bar o_a$ to $a_0$'s letter log-probabilities on every item.
In the version reported in the main text, $\bar o_a$ is estimated on the development items and the menu's static action is chosen there by the rule used for the families; its $\Grepro$ on the evaluation items is \val{qwen.oos.real} and \val{qwen.oos.rd} points for the offsets of the real programs and of \RDf{} on Qwen3-4B-Base and \val{llama.oos.real} and \val{llama.oos.rd} on Llama-3.1-8B (\val{qwen.oos.real.fam}, \val{qwen.oos.rd.fam}, \val{llama.oos.real.fam} and \val{llama.oos.rd.fam} with the families' static actions), and intervals condition on the fitted offsets. The percentages in the main text are ratios of unrounded values.
Evaluation items whose number of options does not occur among the development items of their task (\val{qwen.oos.missing} items) keep $a_0$'s log-probabilities.
Estimated on the evaluation items instead, the menu reaches \val{qwen.ls.real} and \val{qwen.ls.rd} points on Qwen3-4B-Base and \val{llama.ls.real} and \val{llama.ls.rd} on Llama-3.1-8B. With the options of prompt 2 rotated, the development-fitted menu reaches \val{qwen.v10.S_real_perm} and \val{qwen.v10.S_rd_perm} points on Qwen3-4B-Base and \val{llama.v10.S_real_perm} and \val{llama.v10.S_rd_perm} on Llama-3.1-8B (\cref{tab:permuted}).

\paragraph{Letter-offset-adjusted oracle.}
The adjusted oracle subtracts from every action's letter log-probabilities its offsets $\bar o_a$ estimated on the development items, leaves $a_0$ unchanged, and is otherwise computed as for the families, in both designs; the \val{qwen.adj.missing} evaluation items whose number of options does not occur among the development items of their task are not adjusted.
Its static action is chosen on the development items from the adjusted scores, the rule used for the letter-offset menu; the families' own static actions and $a_0$ are sensitivity analyses (\cref{tab:adjusted}).
Subtracting the mean offsets does not remove item-specific letter preferences, and the adjusted oracle does not separate specific from non-specific effects.

\begin{table}[htbp]
\centering
\begin{minipage}{396pt}\approvedtablefont
\caption{\textbf{Large-menu study: letter-offset adjustment.}}
\label{tab:adjusted}
\begingroup\fontsize{8.5}{10}\selectfont
\setlength{\tabcolsep}{1pt}\renewcommand{\arraystretch}{1.02}
\begin{tabular*}{\linewidth}{@{\extracolsep{\fill}}lccc@{\hspace{4pt}}ccc@{}}
\toprule
 & \multicolumn{3}{c}{Fixed: $\Grepro$ (pp)} & \multicolumn{3}{c}{Rotated: $\Grepro$ (pp)} \\
\cmidrule(lr){2-4}\cmidrule(lr){5-7}
Static & Real & \RDf & $\Delta$ & Real & \RDf & $\Delta$ \\
\midrule
\addlinespace[2pt]
\multicolumn{7}{@{}l}{\textbf{Qwen3-4B-Base}} \\[1pt]
Dev. & \cib{7.3}{6.4}{8.2} & \ci{7.0}{6.2}{7.8} & \ci{0.3}{-0.6}{1.2} & \cib{-3.1}{-3.7}{-2.4} & \ci{-4.0}{-4.6}{-3.4} & \ci{0.9}{0.2}{1.6} \\
Family & \ci{7.6}{6.7}{8.4} & \cib{7.9}{7.0}{8.7} & \ci{-0.3}{-1.2}{0.6} & \cib{-3.0}{-3.6}{-2.3} & \ci{-3.7}{-4.3}{-3.1} & \ci{0.7}{0.03}{1.4} \\
$a_0$ & \ci{7.0}{6.2}{7.8} & \ci{7.0}{6.2}{7.8} & \ci{0.0}{-0.6}{0.6} & \cib{-3.5}{-4.2}{-2.9} & \ci{-4.3}{-4.9}{-3.7} & \ci{0.8}{0.3}{1.2} \\
\addlinespace[2pt]
\multicolumn{7}{@{}l}{\textbf{Llama-3.1-8B}} \\[1pt]
Dev. & \ci{7.4}{6.6}{8.2} & \cib{9.9}{9.0}{10.8} & \ci{-2.5}{-3.5}{-1.6} & \ci{-2.6}{-3.2}{-1.9} & \cib{-2.5}{-3.1}{-1.8} & \ci{-0.1}{-0.9}{0.7} \\
Family & \ci{8.3}{7.3}{9.1} & \cib{8.6}{7.8}{9.5} & \ci{-0.4}{-1.4}{0.6} & \cib{-1.6}{-2.3}{-0.9} & \ci{-3.7}{-4.3}{-3.1} & \ci{2.1}{1.3}{2.9} \\
$a_0$ & \ci{7.4}{6.5}{8.2} & \cib{8.5}{7.6}{9.3} & \ci{-1.1}{-1.9}{-0.4} & \cib{-2.6}{-3.2}{-1.9} & \ci{-3.6}{-4.2}{-3.0} & \ci{1.0}{0.5}{1.5} \\
\bottomrule
\end{tabular*}\endgroup

\par\vspace{6pt}
{\fontsize{8.5}{10}\selectfont\raggedright
Original RD draw. $\Grepro$ is cross-prompt headroom after subtracting development-fitted mean letter offsets from edited scores, with $a_0$ unchanged (pp). Dev. selects the static reference on adjusted development scores (primary); Family uses the original family static action, and $a_0$ uses the unmodified pass. Fixed preserves option order; Rotated applies a nonzero cyclic rotation on prompt 2. $\Delta=\Grepro(\Real)-\Grepro(\RDm)$. Brackets are 95\% paired, task-stratified item-bootstrap CIs with fitted offsets held fixed. Adjustment need not remove item-specific letter preferences. Bold marks the larger displayed Real/\RDf{} estimate in each pair.\par}
\end{minipage}
\end{table}

\paragraph{Rotated options.}
For an item with $n>1$ options, prompt 2 rotates them right by $r_i\in\{1,\dots,n-1\}$, given by a hash of the item's identifier, so option $g$ moves to position $(g+r_i)\bmod n$; the rule reads only the item's identifier and $n$.
\val{P2.selfref} MMLU-Pro items whose options refer to other options (for example ``both of the above'') are rotated like the others, with their text not edited.
Prompt 1 is the same as in the fixed-order design, and the placebo scales and static actions are unchanged.
The matching rule, applied to answer changes on the evaluation items, held on both models (\val{qwen.pm.within} and \val{llama.pm.within} of 32 programs within tolerance, median ratios \val{qwen.pm.median} and \val{llama.pm.median}).
\Cref{tab:permuted} gives the results. On the same items, rotated minus fixed, same-prompt headroom changes by \val{qwen.v14.Graw_real.diff}~\val{qwen.v14.Graw_real.diff.ci} (real) and \val{qwen.v14.Graw_rd.diff}~\val{qwen.v14.Graw_rd.diff.ci} points (\RDf) on Qwen3-4B-Base and by \val{llama.v14.Graw_real.diff}~\val{llama.v14.Graw_real.diff.ci} and \val{llama.v14.Graw_rd.diff}~\val{llama.v14.Graw_rd.diff.ci} on Llama-3.1-8B, and the unmodified model's accuracy on prompt 2 by \val{qwen.v14.acc0.diff}~\val{qwen.v14.acc0.diff.ci} and \val{llama.v14.acc0.diff}~\val{llama.v14.acc0.diff.ci} (\val{qwen.v12.acc0.p2.perm} against \val{qwen.v12.acc0.p2.fixed} and \val{llama.v12.acc0.p2.perm} against \val{llama.v12.acc0.p2.fixed}; paired task-stratified bootstrap).
With $a_0$ as the static action of both families the rotated headroom is \val{qwen.pm.a0.real} and \val{qwen.pm.a0.rd} points on Qwen3-4B-Base and \val{llama.pm.a0.real} and \val{llama.pm.a0.rd} on Llama-3.1-8B; without the \val{P2.selfref} self-referencing items, with the static actions chosen on the development items, it is \val{qwen.pm.ex.real}, \val{qwen.pm.ex.rd}, \val{llama.pm.ex.real} and \val{llama.pm.ex.rd}.

\begin{table}[htbp]
\centering
\begin{minipage}{396pt}\approvedtablefont
\caption{\textbf{Large-menu study: option order and letter offsets.}}
\label{tab:permuted}
\begingroup\fontsize{8.5}{10}\selectfont
\setlength{\tabcolsep}{2.5pt}\renewcommand{\arraystretch}{1.04}
\begin{tabular*}{\linewidth}{@{\extracolsep{\fill}}lccccc@{}}
\toprule
 & \multicolumn{2}{c}{Headroom $\Grepro$ (pp)} & Drop (pp) & \multicolumn{2}{c}{Offset-menu $\Grepro$ (pp)} \\
\cmidrule(lr){2-3}\cmidrule(lr){5-6}
Family & Fixed & Rotated & Fixed $-$ rotated & Fixed & Rotated \\
\midrule
\addlinespace[0pt]
\multicolumn{6}{@{}l}{\textbf{Qwen3-4B-Base}} \\[1pt]
Real & \ci{9.0}{8.1}{9.9} & \cib{-3.3}{-4.0}{-2.7} & \ci{12.3}{11.5}{13.2} & \num{8.8} & \num{-1.8} \\
\RDf & \cib{11.8}{10.9}{12.8} & \ci{-5.7}{-6.3}{-5.1} & \ci{17.5}{16.6}{18.4} & \num{12.2} & \num{-2.8} \\
\addlinespace[3pt]
\multicolumn{6}{@{}l}{\textbf{Llama-3.1-8B}} \\[1pt]
Real & \ci{10.1}{9.1}{11.0} & \cib{-2.5}{-3.2}{-1.8} & \ci{12.5}{11.6}{13.5} & \num{10.9} & \num{-3.0} \\
\RDf & \cib{15.6}{14.5}{16.6} & \ci{-6.1}{-6.7}{-5.5} & \ci{21.7}{20.7}{22.7} & \num{13.8} & \num{-4.4} \\
\bottomrule
\end{tabular*}\endgroup

\par\vspace{6pt}
{\fontsize{8.5}{10}\selectfont\raggedright
Original RD draw. $\Grepro$ is cross-prompt headroom over the family's development-selected static action (pp) [95\% paired, task-stratified item-bootstrap CI]. Fixed preserves option order; Rotated applies a nonzero cyclic rotation on prompt 2. Drop is fixed minus rotated headroom on the same items, with its paired interval. Letter-offset columns give point estimates for each family's development-fitted offset menu against that menu's own static action. Bold marks the larger displayed Real/\RDf{} headroom within each model and design.\par}
\end{minipage}
\end{table}

\paragraph{Ties, a positive control and an item-decoupled baseline.}
The post hoc decomposition in \cref{tab:tie-parts} classifies item--selection-direction pairs by the number of correct actions among the 33 candidates: all correct, all wrong, several correct (2--32) or uniquely correct.
All-correct pairs account for less than 0.5\%, and all-wrong pairs for about 9--14\%; each of these two categories contributes less than 0.3 percentage points in absolute value to each family's total under either rule or design.
Keeping $a_0$ whenever it is among the best actions raises every family's cross-prompt headroom and flips the real programs' rotated point estimates on both models; only Qwen's positive level has an interval excluding zero (\cref{tab:large}).
Most of this change occurs in the several-correct category.
Within that category, the rules differ only when $a_0$ is correct, by definition; the analysis did not separately estimate this subset's frequency or absolute headroom.
Under tie averaging, the several-correct category contributes \val{qwen.v15.tie.perm.tie_avg.Delta.tied_best} and \val{llama.v15.tie.perm.tie_avg.Delta.tied_best} points to rotated $\Delta$ on Qwen and Llama, and the uniquely correct category contributes \val{qwen.v15.tie.perm.tie_avg.Delta.unique_best} and \val{llama.v15.tie.perm.tie_avg.Delta.unique_best} points.
Each family is classified separately, so a category's contribution to $\Delta$ compares weighted contributions that need not come from the same items.
The positive control plants a mean-preserving effect on one real program (\val{qwen.v11.pc.name} on Qwen3-4B-Base, \val{llama.v11.pc.name} on Llama-3.1-8B: the repeat with the eighth smallest calibration churn of the 16): the log-probability of the correct option is raised by $\delta$ on a random fraction $f$ of items and lowered by $\delta f/(1-f)$ on the others, on both prompts, so that the effect follows the option's text when the options are rotated; it counts as detected when the 95\% lower bound of the increase in $\Grepro$ exceeds zero (task-stratified item bootstrap, 1,000 replicates, 200 random subsets per cell); with $\delta=1$ nat on $f=10\%$ of items it is detected in \val{L.v11.pc.power.min}--\val{L.v11.pc.power.max} of plantings in the rotated design.
The item-decoupled baseline scores each item's other prompt with the average selection weights of the other items of its task (tie sets weighted uniformly); cross-prompt oracle accuracy minus this baseline does not depend on the static action and measures stable item-action association, not computation-specific effects.
It is \val{qwen.v11.dec_real_fixed} and \val{qwen.v11.dec_rd_fixed} points for the real programs and the placebo with fixed order and \val{qwen.v11.dec_real_perm} and \val{qwen.v11.dec_rd_perm} after rotation on Qwen3-4B-Base, and \val{llama.v11.dec_real_fixed}, \val{llama.v11.dec_rd_fixed}, \val{llama.v11.dec_real_perm} and \val{llama.v11.dec_rd_perm} on Llama-3.1-8B; the differences, real minus placebo, are \val{qwen.v11.dec_Delta_fixed}~\val{qwen.v11.dec_Delta_fixed.ci} and \val{qwen.v11.dec_Delta_perm}~\val{qwen.v11.dec_Delta_perm.ci}, and \val{llama.v11.dec_Delta_fixed}~\val{llama.v11.dec_Delta_fixed.ci} and \val{llama.v11.dec_Delta_perm}~\val{llama.v11.dec_Delta_perm.ci}.

\begin{table}[htbp]
\centering
\begin{minipage}{396pt}\approvedtablefont
\caption{\textbf{Contributions by the selection prompt\textquotesingle s tie type.}}
\label{tab:tie-parts}
\begingroup\fontsize{8.5}{10}\selectfont
\setlength{\tabcolsep}{3pt}\renewcommand{\arraystretch}{1.04}
\begin{tabular*}{\linewidth}{@{\extracolsep{\fill}}llccccc@{}}
\toprule
Rule & Family & All correct & All wrong & Several & Unique & Total \\
\midrule
\multicolumn{7}{@{}l}{\textbf{Qwen3-4B-Base; shared order}} \\[1pt]
Tie average & Real & \val{qwen.v15.tie.fixed.tie_avg.real.all_correct} & \val{qwen.v15.tie.fixed.tie_avg.real.all_wrong} & \val{qwen.v15.tie.fixed.tie_avg.real.tied_best} & \val{qwen.v15.tie.fixed.tie_avg.real.unique_best} & \val{qwen.v15.tie.fixed.tie_avg.real.total} \\
 & \RDf & \val{qwen.v15.tie.fixed.tie_avg.rd.all_correct} & \val{qwen.v15.tie.fixed.tie_avg.rd.all_wrong} & \val{qwen.v15.tie.fixed.tie_avg.rd.tied_best} & \val{qwen.v15.tie.fixed.tie_avg.rd.unique_best} & \val{qwen.v15.tie.fixed.tie_avg.rd.total} \\
 & $\Delta$ & \val{qwen.v15.tie.fixed.tie_avg.Delta.all_correct} & \val{qwen.v15.tie.fixed.tie_avg.Delta.all_wrong} & \val{qwen.v15.tie.fixed.tie_avg.Delta.tied_best} & \val{qwen.v15.tie.fixed.tie_avg.Delta.unique_best} & \val{qwen.v15.tie.fixed.tie_avg.Delta.total} \\
$a_0$ kept & Real & \val{qwen.v15.tie.fixed.a0_kept.real.all_correct} & \val{qwen.v15.tie.fixed.a0_kept.real.all_wrong} & \val{qwen.v15.tie.fixed.a0_kept.real.tied_best} & \val{qwen.v15.tie.fixed.a0_kept.real.unique_best} & \val{qwen.v15.tie.fixed.a0_kept.real.total} \\
 & \RDf & \val{qwen.v15.tie.fixed.a0_kept.rd.all_correct} & \val{qwen.v15.tie.fixed.a0_kept.rd.all_wrong} & \val{qwen.v15.tie.fixed.a0_kept.rd.tied_best} & \val{qwen.v15.tie.fixed.a0_kept.rd.unique_best} & \val{qwen.v15.tie.fixed.a0_kept.rd.total} \\
 & $\Delta$ & \val{qwen.v15.tie.fixed.a0_kept.Delta.all_correct} & \val{qwen.v15.tie.fixed.a0_kept.Delta.all_wrong} & \val{qwen.v15.tie.fixed.a0_kept.Delta.tied_best} & \val{qwen.v15.tie.fixed.a0_kept.Delta.unique_best} & \val{qwen.v15.tie.fixed.a0_kept.Delta.total} \\
\addlinespace[4pt]
\multicolumn{7}{@{}l}{\textbf{Qwen3-4B-Base; rotated order}} \\[1pt]
Tie average & Real & \val{qwen.v15.tie.perm.tie_avg.real.all_correct} & \val{qwen.v15.tie.perm.tie_avg.real.all_wrong} & \val{qwen.v15.tie.perm.tie_avg.real.tied_best} & \val{qwen.v15.tie.perm.tie_avg.real.unique_best} & \val{qwen.v15.tie.perm.tie_avg.real.total} \\
 & \RDf & \val{qwen.v15.tie.perm.tie_avg.rd.all_correct} & \val{qwen.v15.tie.perm.tie_avg.rd.all_wrong} & \val{qwen.v15.tie.perm.tie_avg.rd.tied_best} & \val{qwen.v15.tie.perm.tie_avg.rd.unique_best} & \val{qwen.v15.tie.perm.tie_avg.rd.total} \\
 & $\Delta$ & \val{qwen.v15.tie.perm.tie_avg.Delta.all_correct} & \val{qwen.v15.tie.perm.tie_avg.Delta.all_wrong} & \val{qwen.v15.tie.perm.tie_avg.Delta.tied_best} & \val{qwen.v15.tie.perm.tie_avg.Delta.unique_best} & \val{qwen.v15.tie.perm.tie_avg.Delta.total} \\
$a_0$ kept & Real & \val{qwen.v15.tie.perm.a0_kept.real.all_correct} & \val{qwen.v15.tie.perm.a0_kept.real.all_wrong} & \val{qwen.v15.tie.perm.a0_kept.real.tied_best} & \val{qwen.v15.tie.perm.a0_kept.real.unique_best} & \val{qwen.v15.tie.perm.a0_kept.real.total} \\
 & \RDf & \val{qwen.v15.tie.perm.a0_kept.rd.all_correct} & \val{qwen.v15.tie.perm.a0_kept.rd.all_wrong} & \val{qwen.v15.tie.perm.a0_kept.rd.tied_best} & \val{qwen.v15.tie.perm.a0_kept.rd.unique_best} & \val{qwen.v15.tie.perm.a0_kept.rd.total} \\
 & $\Delta$ & \val{qwen.v15.tie.perm.a0_kept.Delta.all_correct} & \val{qwen.v15.tie.perm.a0_kept.Delta.all_wrong} & \val{qwen.v15.tie.perm.a0_kept.Delta.tied_best} & \val{qwen.v15.tie.perm.a0_kept.Delta.unique_best} & \val{qwen.v15.tie.perm.a0_kept.Delta.total} \\
\addlinespace[4pt]
\multicolumn{7}{@{}l}{\textbf{Llama-3.1-8B; shared order}} \\[1pt]
Tie average & Real & \val{llama.v15.tie.fixed.tie_avg.real.all_correct} & \val{llama.v15.tie.fixed.tie_avg.real.all_wrong} & \val{llama.v15.tie.fixed.tie_avg.real.tied_best} & \val{llama.v15.tie.fixed.tie_avg.real.unique_best} & \val{llama.v15.tie.fixed.tie_avg.real.total} \\
 & \RDf & \val{llama.v15.tie.fixed.tie_avg.rd.all_correct} & \val{llama.v15.tie.fixed.tie_avg.rd.all_wrong} & \val{llama.v15.tie.fixed.tie_avg.rd.tied_best} & \val{llama.v15.tie.fixed.tie_avg.rd.unique_best} & \val{llama.v15.tie.fixed.tie_avg.rd.total} \\
 & $\Delta$ & \val{llama.v15.tie.fixed.tie_avg.Delta.all_correct} & \val{llama.v15.tie.fixed.tie_avg.Delta.all_wrong} & \val{llama.v15.tie.fixed.tie_avg.Delta.tied_best} & \val{llama.v15.tie.fixed.tie_avg.Delta.unique_best} & \val{llama.v15.tie.fixed.tie_avg.Delta.total} \\
$a_0$ kept & Real & \val{llama.v15.tie.fixed.a0_kept.real.all_correct} & \val{llama.v15.tie.fixed.a0_kept.real.all_wrong} & \val{llama.v15.tie.fixed.a0_kept.real.tied_best} & \val{llama.v15.tie.fixed.a0_kept.real.unique_best} & \val{llama.v15.tie.fixed.a0_kept.real.total} \\
 & \RDf & \val{llama.v15.tie.fixed.a0_kept.rd.all_correct} & \val{llama.v15.tie.fixed.a0_kept.rd.all_wrong} & \val{llama.v15.tie.fixed.a0_kept.rd.tied_best} & \val{llama.v15.tie.fixed.a0_kept.rd.unique_best} & \val{llama.v15.tie.fixed.a0_kept.rd.total} \\
 & $\Delta$ & \val{llama.v15.tie.fixed.a0_kept.Delta.all_correct} & \val{llama.v15.tie.fixed.a0_kept.Delta.all_wrong} & \val{llama.v15.tie.fixed.a0_kept.Delta.tied_best} & \val{llama.v15.tie.fixed.a0_kept.Delta.unique_best} & \val{llama.v15.tie.fixed.a0_kept.Delta.total} \\
\addlinespace[4pt]
\multicolumn{7}{@{}l}{\textbf{Llama-3.1-8B; rotated order}} \\[1pt]
Tie average & Real & \val{llama.v15.tie.perm.tie_avg.real.all_correct} & \val{llama.v15.tie.perm.tie_avg.real.all_wrong} & \val{llama.v15.tie.perm.tie_avg.real.tied_best} & \val{llama.v15.tie.perm.tie_avg.real.unique_best} & \val{llama.v15.tie.perm.tie_avg.real.total} \\
 & \RDf & \val{llama.v15.tie.perm.tie_avg.rd.all_correct} & \val{llama.v15.tie.perm.tie_avg.rd.all_wrong} & \val{llama.v15.tie.perm.tie_avg.rd.tied_best} & \val{llama.v15.tie.perm.tie_avg.rd.unique_best} & \val{llama.v15.tie.perm.tie_avg.rd.total} \\
 & $\Delta$ & \val{llama.v15.tie.perm.tie_avg.Delta.all_correct} & \val{llama.v15.tie.perm.tie_avg.Delta.all_wrong} & \val{llama.v15.tie.perm.tie_avg.Delta.tied_best} & \val{llama.v15.tie.perm.tie_avg.Delta.unique_best} & \val{llama.v15.tie.perm.tie_avg.Delta.total} \\
$a_0$ kept & Real & \val{llama.v15.tie.perm.a0_kept.real.all_correct} & \val{llama.v15.tie.perm.a0_kept.real.all_wrong} & \val{llama.v15.tie.perm.a0_kept.real.tied_best} & \val{llama.v15.tie.perm.a0_kept.real.unique_best} & \val{llama.v15.tie.perm.a0_kept.real.total} \\
 & \RDf & \val{llama.v15.tie.perm.a0_kept.rd.all_correct} & \val{llama.v15.tie.perm.a0_kept.rd.all_wrong} & \val{llama.v15.tie.perm.a0_kept.rd.tied_best} & \val{llama.v15.tie.perm.a0_kept.rd.unique_best} & \val{llama.v15.tie.perm.a0_kept.rd.total} \\
 & $\Delta$ & \val{llama.v15.tie.perm.a0_kept.Delta.all_correct} & \val{llama.v15.tie.perm.a0_kept.Delta.all_wrong} & \val{llama.v15.tie.perm.a0_kept.Delta.tied_best} & \val{llama.v15.tie.perm.a0_kept.Delta.unique_best} & \val{llama.v15.tie.perm.a0_kept.Delta.total} \\
\bottomrule
\end{tabular*}\endgroup
\par\vspace{6pt}
{\fontsize{8.5}{10}\selectfont\raggedright
Original RD draw. Entries are post hoc point estimates in percentage points. Categories contain 33, 0, 2--32 and 1 correct actions, respectively, among the 33 candidates on the selection prompt. Each contribution averages the category indicator times (selected accuracy minus static accuracy), over items and both directions with equal task weights. Contributions sum to total headroom before rounding. Each family uses its own categories; $\Delta$ subtracts the weighted RD contribution from the corresponding real contribution, and is not a contrast on a common item subset.\par}
\end{minipage}
\end{table}

\paragraph{A design that keeps the original order.}
The rotated design always moves the correct answer; a sensitivity analysis reweights the existing outputs so that prompt 2 keeps the original order with probability $1/n$ and otherwise uses the stored rotation. Each item thus mixes two placements of its correct answer, so its correct letter is not uniform over positions when $n>2$; the correct letters are only approximately balanced across items.
In this mixture, $\Delta$ is \val{qwen.cpu.mix.D}~\val{qwen.cpu.mix.D.ci} on Qwen3-4B-Base and \val{llama.cpu.mix.D}~\val{llama.cpu.mix.D.ci} on Llama-3.1-8B, and against the item-decoupled baseline \val{qwen.cpu.mix.decD}~\val{qwen.cpu.mix.decD.ci} and \val{llama.cpu.mix.decD}~\val{llama.cpu.mix.decD.ci}, the latter interval including zero.
Real-program headroom is negative on Qwen (\val{qwen.cpu.mix.Greal}~\val{qwen.cpu.mix.Greal.ci}) and has an interval containing zero on Llama (\val{llama.cpu.mix.Greal}~\val{llama.cpu.mix.Greal.ci}); \RDf{} is negative on both (\val{qwen.cpu.mix.Grd}~\val{qwen.cpu.mix.Grd.ci} and \val{llama.cpu.mix.Grd}~\val{llama.cpu.mix.Grd.ci}).
The letter-offset menus reach \val{qwen.cpu.mix.Sreal}~\val{qwen.cpu.mix.Sreal.ci} and \val{llama.cpu.mix.Sreal}~\val{llama.cpu.mix.Sreal.ci} points (real offsets) against their own static action, but \val{qwen.cpu.mix.decSreal}~\val{qwen.cpu.mix.decSreal.ci} and \val{llama.cpu.mix.decSreal}~\val{llama.cpu.mix.decSreal.ci} against the item-decoupled baseline, so that baseline is not a null in this design.
The mixture reweights existing outputs and adds no information beyond them; it shows that the size of the residual depends on the rotation design and the reference, not that a letter-free null has been verified.

\Cref{fig:design-reference} reports the rotation and mixture contrasts under the two references; they are different estimands.

\begin{figure}[!ht]
\centering
\begin{minipage}{396pt}
\includegraphics[width=\linewidth]{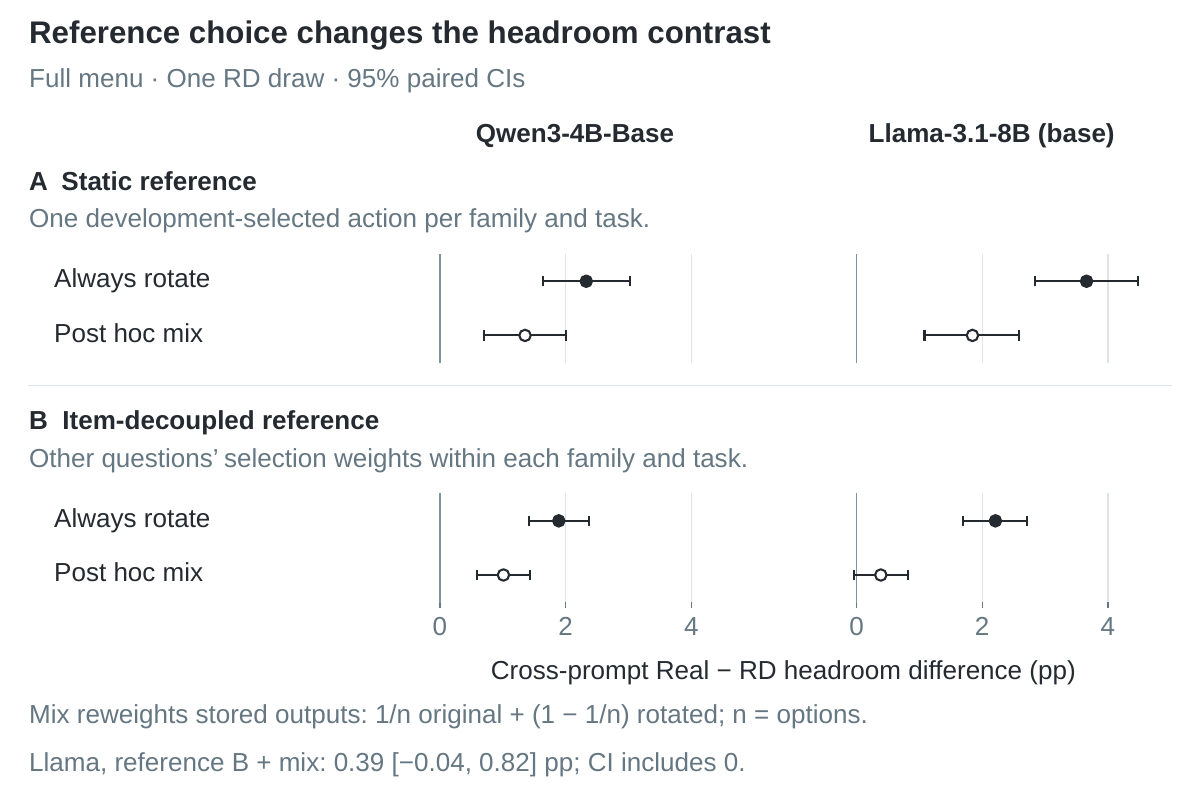}
\caption[Reference and design sensitivity]{Real-minus-RD cross-prompt headroom under two references (original RD draw). A: one static action selected on development data per family and task. B: other items' leave-one-out selection weights within the same family and task, recomputed in bootstrap. These are different estimands. Filled circles use a nonzero cyclic option rotation on prompt 2. Open circles reweight stored outputs by $1/n$ original order and $1-1/n$ rotated order, where $n$ is the option count. This post-hoc mixture adds no observations and is not a randomized or fully balanced design. Bars are 95\% paired, task-stratified item-bootstrap intervals. Both models use the same scale, the full 32-program menu plus $a_0$, tie averaging, 4,413 items and equal task weights. Llama's B-mixture interval includes zero. Zero denotes equal headrooms under the specified reference, not a validated null for computation-specific effects.}
\label{fig:design-reference}
\end{minipage}
\end{figure}

\paragraph{Calibration of the decision $\Delta>0$.}
Semi-synthetic populations resample the stored items of each task (1,471 per task) and, independently per item, swap the two families' outcomes and their static actions, so that the population $\Delta$ is exactly zero; alternatives add, on a random fraction of items, a correct answer for every real program on both prompts, a strong and unrealistic signal sized to about 1, 2 or 4 points. Each of 1,000 replicates applies the paper's decision ($\Delta>0$ if the lower end of the 95\% task-stratified bootstrap interval, 10,000 replicates, exceeds zero). With no planted effect the decision is positive in at most \val{L.cpu.pow.n0.max}\% of replicates per cell (nominal one-sided 2.5\%). A one-point advantage is detected in \val{L.cpu.pow.alt1.perm.min}--\val{L.cpu.pow.alt1.perm.max}\% of replicates in the rotated design and \val{L.cpu.pow.alt1.fixed.min}--\val{L.cpu.pow.alt1.fixed.max}\% in the fixed design, whose contrast is noisier (standard deviation of $\Delta$ \val{L.cpu.pow.sd.fixed.min}--\val{L.cpu.pow.sd.fixed.max} against \val{L.cpu.pow.sd.perm.min} points), and a two-point advantage in \val{L.cpu.pow.alt2.min}--\val{L.cpu.pow.alt2.max}\%. A normal approximation based on the original interval widths gives an 80\%-power threshold of about \val{L.v11.mde.perm.min}--\val{L.v11.mde.perm.max} points for rotated $\Delta$; the simulation has its own variance and does not validate that threshold. It assesses the decision rule on the constructed populations only.

\FloatBarrier
\section{Three-program study: data and further results}
\label{app:ladder}

\paragraph{Items and inference.}
The evaluation set has 600 held-out items per task: MMLU-Pro test items stratified by category, 40 items from each of the 15 BBH letter-choice tasks, and AQuA-RAT training items; calibration used 100 further items per task.
Few-shot examples are shared within each of 14 MMLU-Pro categories, 15 BBH tasks and one AQuA example set.
The clustered bootstrap resamples the categories and tasks, but AQuA's 600 items singly; it therefore does not capture dependence induced by AQuA's shared examples.
The static action is the action with the best mean accuracy on the selection prompt, re-chosen in every bootstrap replicate; on the pooled items it is $a_0$ for every family.
When it is instead chosen per task, the real programs' static action differs from $a_0$ on MMLU-Pro and AQuA; under that rule, too, no comparison survives the Holm correction ($\Delta$ of \val{S.pt.per_dataset.RDS.delta}, \val{S.pt.per_dataset.SHS.delta} and \val{S.pt.per_dataset.ROT.delta} points for \RDi, \SH{} and \Rot).
The bootstrap resamples items without stratification by task; two-sided bootstrap $p$-values are twice the smaller share of item-bootstrap replicates of $\Delta$ at or below and at or above zero. For \RDi, \SH{} and \Rot{} they are \val{S.pt.pooled.RDS.p}, \val{S.pt.pooled.SHS.p} and \val{S.pt.pooled.ROT.p}, so the smallest exceeds the first Holm threshold of $0.05/3$ and no comparison is rejected; the bootstrap distribution of $\Delta$ is skewed, so these $p$-values are larger than a normal approximation from the interval widths would suggest. The Bonferroni (98.33\%) intervals below are percentile intervals of the same item bootstrap.

\begin{table}[htbp]
\centering
\begin{minipage}{396pt}\approvedtablefont
\caption{\textbf{Three-program study: headroom, ratios, and paired differences.}}
\label{tab:ladder}
\begingroup\fontsize{8.5}{10}\selectfont
\setlength{\tabcolsep}{3pt}\renewcommand{\arraystretch}{1.04}
\begin{tabular*}{\linewidth}{@{\extracolsep{\fill}}lccccc@{}}
\toprule
 & \multicolumn{2}{c}{Headroom (pp)} & \multicolumn{2}{c}{Placebo / Real} & $\Delta$ (pp) \\
\cmidrule(lr){2-3}\cmidrule(lr){4-5}
Family & $\Graw$ & $\Grepro$ & $\rho_{\mathrm{same}}$ & $\rho$ & Real $-$ placebo \\
\midrule
Real & \num{16.1} & \numb{6.7} & \textemdash & \textemdash & \textemdash \\
\RDi & \num{15.3} & \num{4.6} & \ci{0.95}{0.87}{1.04} & \ci{0.69}{0.50}{0.94} & \ci{2.1}{0.4}{3.7} \\
\SH & \num{15.9} & \num{5.3} & \ci{0.99}{0.91}{1.08} & \ci{0.79}{0.61}{1.00} & \ci{1.4}{0.01}{2.9} \\
\Rot & \numb{16.7} & \num{5.4} & \ci{1.03}{0.95}{1.13} & \ci{0.81}{0.61}{1.11} & \ci{1.3}{-0.7}{2.9} \\
\bottomrule
\end{tabular*}\endgroup

\par\vspace{6pt}
{\fontsize{8.5}{10}\selectfont\raggedright
Headroom columns are point estimates (pp) relative to the family's best mean-accuracy action on the selection prompt, reselected in each bootstrap replicate. Brackets are unadjusted 95\% paired item-bootstrap intervals without task stratification. $\rho_{\mathrm{same}}$ and $\rho$ are placebo-to-real ratios of $\Graw$ and $\Grepro$; they do not estimate the share due to nonspecific effects. $\Delta=\Grepro(\Real)-\Grepro(N)$ for placebo $N$. No comparison is rejected after Holm correction. Bold marks the highest displayed estimate among the four families in each headroom column.\par}
\end{minipage}
\end{table}

\paragraph{Sensitivity.}
Resampling MMLU-Pro categories and BBH tasks as clusters gives $\Delta$ intervals of \val{S.RDS.delta.cci}, \val{S.SHS.delta.cci} and \val{S.ROT.delta.cci} points; the Bonferroni (98.33\%) intervals are \val{S.RDS.delta.ci9833}, \val{S.SHS.delta.ci9833} and \val{S.ROT.delta.ci9833} for $\Delta$ and \val{S.RDS.rcross.ci9833}, \val{S.SHS.rcross.ci9833} and \val{S.ROT.rcross.ci9833} for $\rho$.
Task-stratified item resampling leaves all three Holm-corrected decisions unchanged.
The minimum detectable $\Delta$ is a normal approximation, $2.80$ times the bootstrap standard deviation of $\Delta$ (\val{S.RDS.delta.sd}, \val{S.SHS.delta.sd} and \val{S.ROT.delta.sd} points for \RDi{}, \SH{} and \Rot{}; 80\% power, two-sided level 0.05); at the first Holm level ($0.05/3$) the factor is $3.24$, giving \val{S.mdeh.min}--\val{S.mdeh.max} points.

\begin{figure}[!ht]
\centering
\begin{minipage}{396pt}
\includegraphics[width=\linewidth]{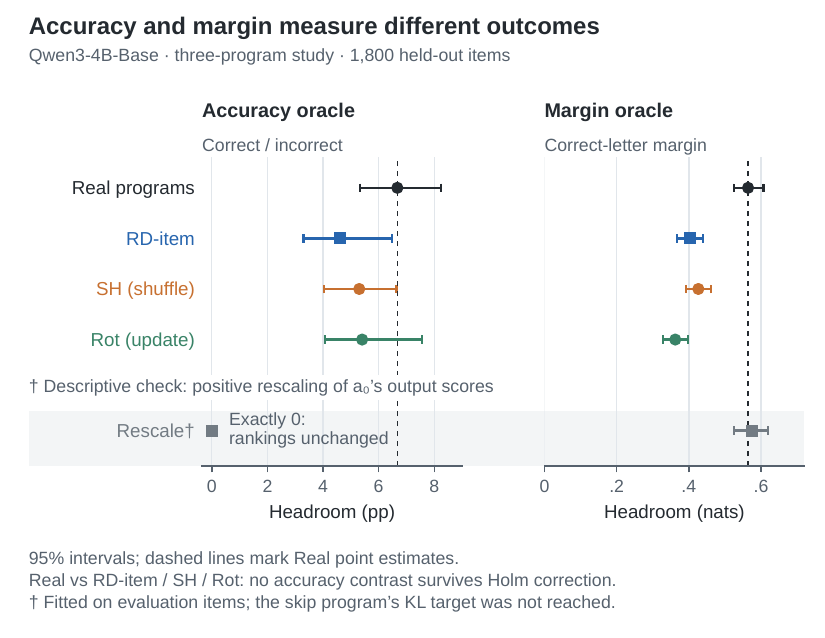}
\caption[Accuracy and margin measure different outcomes]{Cross-prompt headroom in the three-program study (Qwen3-4B-Base; 1,800 held-out items). Each column selects oracle actions and static references using its own metric: accuracy (pp) or correct-letter margin (nats). References maximize the pooled mean metric on the selection prompt and are reselected in each unstratified paired item-bootstrap replicate; intervals are 95\%. Dashed lines mark Real point estimates. No paired accuracy contrast survives Holm correction. Rot rotates hidden updates, not answer options. Rescale multiplies a$_0$'s option log-probabilities by positive factors, preserving every answer ranking: accuracy headroom is zero, while a margin oracle can select among factors. These factors were fitted on evaluation items and held fixed in bootstrap; the skip program's KL target was not reached. Rescale is a descriptive comparison, not a matched control.}
\label{fig:metric}
\end{minipage}
\end{figure}

\paragraph{Margins and output rescaling.}
Under the margin oracle the real programs exceed every placebo (\cref{fig:metric}, right panel).
Their margin gain lies mainly on rows that the unmodified model already answers correctly (\val{S.right.Real.m} against \val{S.right.plc.m.min}--\val{S.right.plc.m.max} nats per row), while on rows it answers incorrectly the placebos obtain \val{S.wrong.RDS}--\val{S.wrong.ROT} points of cross-prompt accuracy gain against the real programs' \val{S.wrong.Real}.
Output rescaling can produce margin headroom without changing predicted answers: a label-aware margin oracle gains from sharpening where the answer is right and flattening where it is wrong.
A family that only rescales the unmodified model's option log-probabilities, $\{a_0,\lambda_1\ell_0,\lambda_2\ell_0,\lambda_3\ell_0\}$ with $\ell_0$ the unmodified model's option log-probabilities and $\lambda_j>0$, changes no answer; with each factor chosen so that the KL divergence over the options matches the corresponding real program's, the factors that flatten are $(10^{-4},0.554,0.680)$, where $10^{-4}$, the lower end of the search range, is reached for the skip, whose KL cannot be matched by flattening, and the family reaches \val{S.scale.flat}~\val{S.scale.flat.ci} nats of cross-prompt margin headroom, against the real programs' \val{S.real.mrepro}~\val{S.real.mrepro.ci}.

\FloatBarrier
\section{Transfer test: data, execution and review}
\label{app:transfer}

\paragraph{Programs and problems.}
The release of \citet{westerhoff2026programs} used in this study (MIT license; commit \texttt{4164a68}) contains, for Qwen3-8B and each DART-Math difficulty, the search log of every problem: 200 search steps, each recording a layer path and its reward.
A layer path is a sequence of layer indices in which layers may be omitted or a segment repeated.
At difficulty~1, the transfer study starts from the \val{T.archive} GSM8K-sourced problems that the unmodified model answers incorrectly and whose logs contain a rewarded path.
Problems follow a fixed order defined by a salted hash of the problem identifier.
Each problem is assigned the first rewarded path in its log, with no further choice among successful candidates.
Questions and reference answers are those of the DART-Math GSM8K pool, as in the release.

\paragraph{Execution.}
Programs are applied verbatim by rerouting the model's layers (one attention copy and key-value cache slot per executed position), with the release's one-shot direct-answer prompt (one solved example, $1+1$, before the question), greedy decoding, at most 50 new tokens, bf16 weights, a fixed attention kernel, batch size~1, and the release's answer grader.
Before any rewording was evaluated, this configuration was checked against the archive: correctness agreed on \val{T.replay.identity}\% of 100 problems for the unmodified model and on \val{T.replay.rewarded}\% and \val{T.replay.unrewarded}\% of 100 archived rewarded and unrewarded problem--program pairs, respectively.
Because archived outcomes came from batched execution, every problem in the study was re-confirmed under the execution configuration described above.

\paragraph{Rewordings and review.}
Rewordings were written by an LLM (Claude Opus~5.5) in contexts that saw only the problem texts and a fixed instruction: keep every quantity, number, unit, name and condition and the question; change wording and structure; add, remove, resolve or hint at nothing; one rewording sentence by sentence, one restructured with the question last.
They were frozen before any program ran on them.
A rewording was kept only if it passed mechanical checks (identical multiset of numbers, not a copy, ends with the question) and two independent LLM reviewers, which saw neither programs nor outcomes, judged its meaning unchanged and nothing added, with one of them solving both versions to the reference answer; a problem whose two rewordings were both rejected was dropped.
Of the \val{T.rev.n} rewordings reviewed for the confirmation, \val{T.rev.rej} were rejected, most because a reviewer gave no numerical answer for the original problem.
The review is automatic and does not guarantee semantic equivalence.

\paragraph{The confirmation pool.}
Candidates excluded every problem processed in the pilot or used in the replay checks, and one near-duplicate of a retained problem; of the first \val{T.processed} of the \val{T.fresh} remaining candidates, \val{T.reconfirmed} were re-confirmed and \val{T.kept} had at least one kept rewording, so the \val{T.eligible} that met both conditions were eligible, and the first \val{T.n} in the fixed order form the pool (\val{T.groups} groups, \val{T.paraphrases} kept rewordings).
Problems were sorted by executed length and a salted hash and cut into consecutive groups of four; groups differ by at most \val{T.spread.max} executed layers, and a donor's program differs from the problem's own in \val{T.eta.pct}\% of donor pairs.
The pool, the analysis and the decision rule were fixed before any program ran on a confirmation rewording.

\begin{table}[htbp]
\centering
\begin{minipage}{396pt}\approvedtablefont
\caption{\textbf{Transfer to reworded problems: confirmation and descriptive styles.}}
\label{tab:transfer}
\begingroup\fontsize{8.5}{10}\selectfont
\setlength{\tabcolsep}{3pt}\renewcommand{\arraystretch}{1.04}
\begin{tabular*}{\linewidth}{@{\extracolsep{\fill}}lccccc@{}}
\toprule
 &  & \multicolumn{3}{c}{Accuracy (\%)} & $\Tres$ (pp) \\
\cmidrule(lr){3-5}
Rewordings & $n$ & Unmodified & Own program & Donors & Own $-$ donors \\
\midrule
All kept rewordings & \num{200} & \num{19.0} & \numb{41.5} & \num{15.5} & \ci{26.0}{20.9}{31.4} \\
Same executed length & \num{136} & \num{22.1} & \numb{43.4} & \num{16.5} & \ci{26.8}{21.3}{32.1} \\
\cmidrule(lr){1-6}
Sentence by sentence & \num{200} & \num{18.0} & \numb{49.5} & \num{15.8} & \num{33.7} \\
Restructured & \num{199} & \num{20.1} & \numb{33.2} & \num{14.9} & \num{18.3} \\
\bottomrule
\end{tabular*}\endgroup

\par\vspace{6pt}
{\fontsize{8.5}{10}\selectfont\raggedright
Post-trained Qwen3-8B; $n$ counts fresh problems conditioned on search rescue and re-confirmation. Accuracy is averaged within then across problems. Own uses the problem's search-selected program; Donors averages the other three programs in its group formed by executed length. Same executed length retains groups with four identical lengths. $\Tres$ is own minus donors (pp) [95\% group-bootstrap CI]. The overall contrast and same-length supplement were specified in advance. The style rows summarize their respective 200/199 problems; the pre-specified paired style comparison instead uses the \val{T.style.n} problems with both styles and gives \val{T.style.diff}~\val{T.style.ci} points. Rewordings passed automatic review; the comparison has no placebo. Bold marks the highest displayed accuracy within each row.\par}
\end{minipage}
\end{table}

\section{Random-direction redraws}
\label{app:redraw}

\paragraph{Design.}
Each model is evaluated with three independent random-direction draws: the original draw and two fresh draws.
Draw 1 is the original \RDf{} family used in the detailed diagnostic figures and letter-offset comparisons; the submenu tables also include draws 2--3.
Each new draw replaces the Gaussian bases, recalibrates the 32 scales on the same calibration items and reselects the placebo's static actions on the development items.
The program menu, norm profiles, prompts, option rotations, matching rules and evaluation items are fixed; the real programs and their static actions are reused.
The construction retains correlations between overlapping programs and opposite directions for the skip and repeat of the same segment.
All three draws are reported in \cref{tab:redraw-matching,tab:redraw-contrasts}; no seed was replaced.

\begin{table}[htbp]
\centering
\caption{\textbf{Matching and fixed-order headroom across random-direction draws.}
Calibration: programs within churn tolerance; evaluation matching: the full churn rule and the within-tolerance count for fixed and rotated option order.
Headroom and $C=\Grepro(\RDm)-\tfrac12\Grepro(\Real)$ are in percentage points; brackets are 95\% paired item-bootstrap intervals.
The last column gives the clustered 95\% lower bound for $C$.}
\label{tab:redraw-matching}
\begingroup
\fontsize{8.5}{10.5}\selectfont
\setlength{\tabcolsep}{4pt}
\renewcommand{\arraystretch}{1.12}
\begin{tabular}{@{}lcccccc@{}}
\toprule
Draw & Calibration & \shortstack{Matching\\fixed} & \shortstack{Matching\\rotated} & $\Grepro(\RDm)$ & $C$ & \shortstack{$C$ cluster\\lower bound} \\
\midrule
\RedrawMatchingRows
\bottomrule
\end{tabular}
\endgroup
\end{table}

\begin{table}[htbp]
\centering
\caption{\textbf{Real-minus-placebo contrasts and same-prompt headroom.}
$\Delta=\Grepro(\Real)-\Grepro(\RDm)$; all entries are percentage points.
Brackets are 95\% paired item-bootstrap intervals; the separate cluster column gives the clustered interval for rotated $\Delta$.
Same-prompt $\Graw(\RDm)$ uses the fixed-order design.}
\label{tab:redraw-contrasts}
\begingroup
\fontsize{8.5}{10.5}\selectfont
\setlength{\tabcolsep}{5pt}
\renewcommand{\arraystretch}{1.12}
\begin{tabular}{@{}lcccc@{}}
\toprule
Draw & \shortstack{$\Delta$\\fixed} & \shortstack{$\Delta$\\rotated} & \shortstack{Rotated $\Delta$\\cluster interval} & \shortstack{Same-prompt\\$\Graw(\RDm)$} \\
\midrule
\RedrawContrastsRows
\bottomrule
\end{tabular}
\endgroup
\end{table}

\begin{table}[htbp]
\centering
\begin{minipage}{396pt}\approvedtablefont
\caption{\textbf{Three-draw means and the two sources of variation.}}
\label{tab:draw-summary}
\begingroup\fontsize{8.5}{10}\selectfont
\setlength{\tabcolsep}{2pt}\renewcommand{\arraystretch}{1.10}
\begin{tabular*}{\linewidth}{@{\extracolsep{\fill}}llccc@{}}
\toprule
Model & Contrast & Mean [item interval] & Draw SD & Draw range \\
\midrule
Qwen3-4B-Base & $C$, shared & \storedci{qwen.v16.draw.C.mean} & \val{qwen.v16.draw.C.sd} & \val{qwen.v16.draw.C.range} \\
 & $\Delta$, shared & \storedci{qwen.v16.draw.Delta_fixed.mean} & \val{qwen.v16.draw.Delta_fixed.sd} & \val{qwen.v16.draw.Delta_fixed.range} \\
 & $\Delta$, rotated & \storedci{qwen.v16.draw.Delta_rotated.mean} & \val{qwen.v16.draw.Delta_rotated.sd} & \val{qwen.v16.draw.Delta_rotated.range} \\
\addlinespace[3pt]
Llama-3.1-8B & $C$, shared & \storedci{llama.v16.draw.C.mean} & \val{llama.v16.draw.C.sd} & \val{llama.v16.draw.C.range} \\
 & $\Delta$, shared & \storedci{llama.v16.draw.Delta_fixed.mean} & \val{llama.v16.draw.Delta_fixed.sd} & \val{llama.v16.draw.Delta_fixed.range} \\
 & $\Delta$, rotated & \storedci{llama.v16.draw.Delta_rotated.mean} & \val{llama.v16.draw.Delta_rotated.sd} & \val{llama.v16.draw.Delta_rotated.range} \\
\addlinespace[3pt]
\bottomrule
\end{tabular*}\endgroup
\par\vspace{6pt}
{\fontsize{8.5}{10}\selectfont\raggedright
Full menus, tie averaging; all entries in percentage points. The post hoc mean interval uses a common task-stratified item bootstrap across the three draws. It conditions on their realized bases, calibration and static choices, and does not include uncertainty over new draws. The sample SD and range describe the three draw-level point estimates; they are not confidence intervals. Real outputs are shared across draws, making the SDs of $C$ and fixed-order $\Delta$ identical.\par}
\end{minipage}
\end{table}

\FloatBarrier
\paragraph{Results and uncertainty.}
\val{redraw@findings}
$C$ ranges from \val{redraw@qwen.C1.range} points for Qwen and from \val{redraw@llama.C1.range} for Llama across the three draws, with sample standard deviations of \val{redraw@qwen.C1.sd} and \val{redraw@llama.C1.sd} points.
Rotated-order $\Delta$ ranges from \val{redraw@qwen.D_perm.range} and from \val{redraw@llama.D_perm.range} points, respectively, with standard deviations of \val{redraw@qwen.D_perm.sd} and \val{redraw@llama.D_perm.sd} points.
These ranges and standard deviations describe the three draws; they are not intervals for the random-direction distribution.
Bootstrap intervals quantify evaluation-item uncertainty within each draw.
For the three-draw means in \cref{tab:draw-summary}, the analysis averages each item's contributions across draws before applying one common task-stratified item bootstrap.
These post hoc intervals condition on the three realized controls and include their shared-item dependence, but not uncertainty over fresh control draws.
Draw SDs and ranges are reported separately; three draws provide limited information about the random-direction distribution.
A rough sensitivity check treats the three directions as a random sample.
It adds the between-draw variance of the mean (SD$^2/3$) to the item-bootstrap variance of the three-draw mean, estimated from its 95\% interval half-width, and uses a $t$ quantile with two degrees of freedom.
The resulting intervals are \val{qwen.v16.draw.C.dir.ci} and \val{llama.v16.draw.C.dir.ci} for $C$, \val{qwen.v16.draw.Delta_fixed.dir.ci} and \val{llama.v16.draw.Delta_fixed.dir.ci} for fixed-order $\Delta$, and \val{qwen.v16.draw.Delta_rotated.dir.ci} and \val{llama.v16.draw.Delta_rotated.dir.ci} for rotated $\Delta$ (Qwen, Llama).
Thus $C>0$ and Llama's two orderings survive this check, whereas Qwen's fixed-order ordering and rotated difference are established for the realized draws but not for the construction.
Cluster resampling uses MMLU-Pro categories, BBH tasks and individual ARC items.
The models share evaluation items, and each model reuses its real-family outputs across draws.
A positive rotated $\Delta$ is a relative comparison between families and does not establish positive transferable headroom or a computation-specific effect.

Fixed-order cross-prompt placebo headroom ranges from \val{redraw@qwen.G_rd.range} points for Qwen and from \val{redraw@llama.G_rd.range} for Llama, against real-program headroom of \val{redraw@qwen.d1.G_real} and \val{redraw@llama.d1.G_real} points.
Same-prompt placebo headroom ranges from \val{redraw@qwen.Graw_rd.range} and from \val{redraw@llama.Graw_rd.range} points, respectively; the real programs give \val{redraw@qwen.d1.Graw_real} and \val{redraw@llama.d1.Graw_real} points.
The largest fitted norm multiplier in draws 1--3 is \val{redraw@qwen.d1.scale.max}, \val{redraw@qwen.d2.scale.max} and \val{redraw@qwen.d3.scale.max} for Qwen, and \val{redraw@llama.d1.scale.max}, \val{redraw@llama.d2.scale.max} and \val{redraw@llama.d3.scale.max} for Llama.
Thus large scales remain part of the construction, even though their size varies across draws.

\section{Overview example: selection and coordinates}
\label{app:figure-notes}

\Cref{fig:overview} illustrates a post-hoc selected Qwen3-4B-Base program pair: repeating layers 11--14 and its original paired \RDf{} control.
Of the full 4,413-item evaluation set, all 40 items for which $a_0$ is wrong on both prompts and both edits are correct on Prompt 1 are shown, with no additional margin-based display filter.
Their Prompt-2 outcome frequencies describe this conditional subset, rather than the full oracle menu.
The aggregate headroom analysis instead averages accuracy ties, items and both selection directions against each family's development-selected static reference.

Each coordinate is the correct option's log-probability minus that of the strongest incorrect option on one prompt; all tiles use the same linear axes.
Arrows connect an item's unmodified and edited outputs, not hidden states.
The inset enlarges the lunar-eclipse item already shown in the grid and adds no observation.

\FloatBarrier
\section{Reproducibility}
\label{app:repro}

\paragraph{Code and data.}
The code, item sets, calibration results and per-item outputs (letter log-probabilities of every candidate, and KL divergences), and for the transfer test the rewordings, review records and per-generation outputs, will be released.
\ifanonymous The review supplement provides\else The release will also include\fi{} the internally recorded analysis plans and machine-readable covariance, across-draw and submenu results.

\paragraph{Models.}
Qwen/Qwen3-4B-Base (revision \texttt{906bfd4b}), meta-llama/Llama-3.1-8B (revision \texttt{d04e592b}); for the transfer test Qwen/Qwen3-8B (revision \texttt{b968826d}).

\paragraph{Environment and compute.}
Python 3.12, PyTorch 2.11 (CUDA 12.8), Transformers 5.17, one NVIDIA RTX 5090.
The project used approximately 150 GPU hours, mostly on this GPU.

\end{document}